\numberwithin{equation}{section}
\DeclareMathOperator*{\mL}{\mathcal L}
\DeclareMathOperator*{\mC}{\mathcal C}
\DeclareMathOperator*{\mD}{\mathcal D}
\DeclareMathOperator*{\mG}{\mathcal G}
\DeclareMathOperator*{\bmG}{\overline{\mathcal G}}
\DeclareMathOperator*{\calO}{\mathcal O}
\DeclareMathOperator*{\bP}{\mathbb P}
\DeclareMathOperator*{\bI}{\mathbb I}
\DeclareMathOperator*{\bE}{\mathbb E}
\DeclareMathOperator*{\bdS}{\mathbf{S}}
\DeclareMathOperator*{\bdvS}{\mathbf{\vec{S}}}
\DeclareMathOperator*{\bdvV}{\mathbf{\vec{V}}}
\DeclareMathOperator*{\bdY}{\mathbf{Y}}
\DeclareMathOperator*{\bdvY}{\mathbf{\vec{Y}}}
\DeclareMathOperator*{\bdpi}{\pmb{\pi}}
\DeclareMathOperator*{\tranp}{\text{T}}
\theoremstyle{plain}
\def \endprf{\hfill {\vrule height6pt width6pt depth0pt}\medskip}
\newcommand{\Method}{$\widehat{mLPR}$-}
\newcommand{\Objective}{CATCH}
\newtheorem{theorem}{Theorem}[section]
\newtheorem{proposition}[theorem]{Proposition}
\newtheorem{corollary}[theorem]{Corollary}
\begin{document}

\begin{frontmatter}

% "Title of the Paper"
\title{Ranking hierarchical multi-label classification results with mLPRs}  
\runtitle{mLPR}

\begin{aug}
%%%%%%%%%%%%%%%%%%%%%%%%%%%%%%%%%%%%%%%%%%%%%%%
%% Only one address is permitted per author. %%
%% Only division, organization and e-mail is %%
%% included in the address.                  %%
%% Additional information can be included in %%
%% the Acknowledgments section if necessary. %%
%% ORCID can be inserted by command:         %%
%% \orcid{0000-0000-0000-0000}               %%
%%%%%%%%%%%%%%%%%%%%%%%%%%%%%%%%%%%%%%%%%%%%%%%
\author[A]{\fnms{Yuting} \snm{Ye}\ead[label=e1]{yeyuting@wizardquant.com}},
\author[B]{\fnms{Christine} \snm{Ho}\ead[label=e2]{christine.ho@siriusxm.com}}
\author[C]{\fnms{Ci-Ren} \snm{Jiang}\ead[label=e3]{cirenjiang@ntu.edu.tw}}
\author[D]{\fnms{Wayne Tai} \snm{Lee}\ead[label=e4]{wayne.lee@dixide.net}}
\author[E]{\fnms{Haiyan} \snm{Huang}\thanksref{t1}\ead[label=e5]{hhuang@stat.berkeley.edu}}
\thankstext{t1}{Corresponding author}
%%%%%%%%%%%%%%%%%%%%%%%%%%%%%%%%%%%%%%%%%%%%%%
%% Addresses                                %%
%%%%%%%%%%%%%%%%%%%%%%%%%%%%%%%%%%%%%%%%%%%%%%
\address[A]{Wizard Quant, Shanghai, China.\printead[presep={,\ }]{e1}}
\address[B]{SiriusXM/Pandora, NY, USA.\printead[presep={,\ }]{e2}}
\address[C]{Institute of Statistics and Data Science, National Taiwan University, Taipei, Taiwan. \printead[presep={,\ }]{e3}}
\address[D]{Dixide, Taiwan. \printead[presep={,\ }]{e4}}
\address[E]{Department of Statistics University of California Berkeley, CA, USA.\printead[presep={,\ }]{e5}}
\runauthor{Ye and et al.}
\end{aug}

\begin{abstract}
Hierarchical multi-label classification (HMC) has gained considerable attention in recent decades. A seminal line of HMC research addresses the problem in two stages: first, training individual classifiers for each class, then integrating these classifiers to provide a unified set of classification results across classes while respecting the given hierarchy. In this article, we focus on the less attended second-stage question while adhering to the given class hierarchy. 
%which stands with a key challenge --- how to handle the hierarchical constraint and address the statistical differences in the first-stage classifier scores across different classes in order to make classification decisions that are optimal under a justifiable criterion. 
This involves addressing a key challenge: how to manage the hierarchical constraint and account for statistical differences in the first-stage classifier scores across different classes to make classification decisions that are optimal under a justifiable criterion.
%To tackle this challenge, we introduce a new objective function \Objective\/, and propose a new quantity, multidimensional Local Precision Rate (mLPR), to reflect the likelihood of membership for each object in each class given all classifier scores and the class hierarchy. 
% mLPR reflects the membership chance of an object in each class based on all classifier scores and the class hierarchy
{\color{black} To address this challenge, we introduce a new objective function, called \Objective\/, to ensure reasonable classification performance. To optimize this function, we propose a decision strategy built on a novel metric, the multidimensional Local Precision Rate (mLPR), which reflects the membership chance of an object in a class given all classifier scores and the class hierarchy.}
Particularly, we demonstrate that, under certain conditions, transforming the classifier scores into mLPRs and comparing mLPR values for all objects against all classes can, in theory, ensure the class hierarchy and maximize \Objective\/. In practice, we propose an algorithm HierRank to rank estimated mLPRs under the hierarchical constraint, leading to a ranking that maximizes an empirical version of \Objective\/. Our approach was evaluated on a synthetic dataset and two real datasets, exhibiting superior performance compared to several state-of-the-art methods in terms of improved decision accuracy. 
\end{abstract}

\begin{keyword}
\kwd{hierarchical multi-label classification}
\kwd{hit curve}
\kwd{multidimensional local precision rate (mLPR)}
\kwd{hierarchical ranking}
\end{keyword}

% history:
% \received{\smonth{1} \syear{0000}}

%\tableofcontents

\end{frontmatter}

% Main text entry area
\section{Introduction}\label{sec:intro}
Hierarchical multi-label classification (HMC) is a task that requires incorporating  additional knowledge of the dependency relationships between classes along with the multi-label classification of each object into one or more classes \citep{zhang2013review}. The hierarchical class dependency in HMC is generally represented by a tree or a directed acyclic graph (DAG). Recently, there has been considerable interest in the field of statistics and machine learning in HMC, which is a crucial problem encountered in many applications. In biology and biomedicine, HMC applications include the diagnosis of diseases along a DAG composed of terms from the Unified Medical Language System (UMLS) \citep{huang2010, jiang2014}, the assignment of genes to multiple gene functional categories defined by the Gene Ontology DAG \citep{alves2010, feng2017hierarchical, kahanda2017gostruct}, the categorization of MIPS FunCat rooted tree \citep{valentini2011}, and numerous other biomedical examples \citep{chen2019deep, makrodimitris2019improving, nakano2019machine, pham2021interpreting}. Outside of biology, HMC is commonly used in text classification, music categorization, and image recognition \citep{gupta2016product, salama2016semantic, zeng2017knowledge, yang2018visually}.

Specifically, suppose there is a random object and $K$ classes that are organized in a hierarchical structure. %\sout{The goal of HMC is to determine which classes, out of the K ones, this random object belongs to.}
{\color{black} The goal of HMC is to determine which of the $K$ classes this random object belongs to}. One special condition of HMC requires that this object must be a member of all the parent and ancestor classes in the hierarchy, if it belongs to a child class. A seminal line of HMC research has tackled the problem through a two-stage approach. In the first stage, classifiers are trained for each of the $K$ classes without considering the class hierarchy, as if there were multiple independent classification problems. In the second stage, the task is to render a decision for each object regarding each class based on the class hierarchy and a predefined performance criterion, using the classifier scores from the first stage \citep{silla2011survey, feng2017hierarchical, chen2019deep}. This approach is popular due to its flexibility and computational efficiency, as various classification methods can be applied in the first stage, and the class-specific classifiers can be trained in parallel. However, it remains an open question at the second stage how to effectively integrate and refine the initial classification results from the first stage so that the refined classification results will 1) respect the given class hierarchy and 2) achieve the best possible classification performance as evaluated by a statistically meaningful performance measure.

 In this two-stage approach, some methods make decisions based on class-specific cutoffs for the first-stage classifier scores, which are determined by optimizing objectives such as H-loss or F-measure \citep{barutcuoglu2006, triguero2016labelling}. As the cutoff values are determined without taking into account the hierarchical structure, the class decisions may not respect the hierarchy. %Although a remedial procedure can be employed to ensure that decisions comply with the hierarchical constraint, the remedied decisions may no longer be optimal with respect to the original {\color{black} first-stage} performance objective \citep{ananpiriyakul2014label}.
{\color{black} While a remedial procedure can be employed to enforce compliance with the hierarchical constraint, it no longer guarantees optimal performance for the adjusted decisions \citep{ananpiriyakul2014label}.}

Many other methods proceed with the problem by sorting the objects against all classes, given the class hierarchy and the classifier scores of the objects for every class. In this treatment, a single cutoff on the ranking suffices to produce all decisions. For instance, \citet{jiang2014} proposed an optimal ranking method for the general multi-label setting: transforming the first-stage classifier scores to local precision rates and then ordering them in descending order. The resulting ranking maximizes the pooled precision rate at any pooled recall rate. The local precision rate concept is also attractive in that it lends itself to a Bayesian interpretation, and its value is equivalent to the local true discovery rate ($\ell$tdr)\footnote{True discovery rate equals one minus false discovery rate, where the definition of ``discovery'' can be found in Section 4.1. of \citet{efron2012}; see \citet{efron2012} for a detailed discussion about true discovery rate.} under certain probabilistic assumptions. However, this method does not consider the hierarchical structure.

There are also efforts that do not directly generate a complete ranking, but achieve decision-making under HMC by optimizing a predefined objective function when respecting the class hierarchy, given the first-stage classifiers. \citet{bi2011} proposed an algorithm that maximizes the sum of the top $L$ first-stage classifier scores while respecting the hierarchy, where $L$ is predefined. Nonetheless, directly summing these classifier scores across different classes is potentially problematic because the classifiers for different classes may have {\color{black} very} different statistical properties, which could lead to suboptimal decisions if not properly handled. \citet{bi2015bayes} extended \citet{bi2011} by introducing an algorithm that optimizes some objective function (instead of the sum of the top $L$ classifier scores) under the hierarchical constraint. Although they explored three possible objective/risk functions, they did not specify which one to use, nor did they provide a clear statistical justification or interpretation for the three functions or their hyperparameters.

{\color{black} Additionally, there has been growing interest in using neural network approaches for HMC in recent years. Wehrmann et al. \cite{wehrmann2017hierarchical, wehrmann2018hierarchical} proposed neural network architectures capable of simultaneously optimizing both local and global loss functions. These architectures aim to discover local hierarchical class relationships and extract global information from the entire class hierarchy, while also penalizing hierarchical violations. Giunchiglia and Lukasiewicz \cite{giunchiglia2020neurips, giunchiglia2021multi} introduced a neural network approach that leverages hierarchical information to generate predictions consistent with hierarchy constraints, thereby improving performance. Although delivering excellent performance, these neural network results do not provide direct statistical interpretations.}

In this article, we assume that ``good'' classifiers for individual classes are already given, and we aim to develop a strategy for optimal decision-making at the second stage by integrating the initial classifier scores (from the first stage) under an HMC framework. A key challenge is how to handle the hierarchical constraint and  the statistical differences of the given classifier scores among different classes in one unified model. If not accounted for properly, such differences can lead to poor classification decisions on some classes; see Supplement D.5 for a detailed discussion. To address this challenge, we develop a strategy based on a new quantity, called the multidimensional Local Precision Rate (mLPR), for each object in each class, which can be derived based on the first-stage classifier scores and the given class hierarchy. Under certain conditions, we show that transforming the classifier scores into mLPRs and comparing mLPR values for all objects against all classes can, in theory, ensure the hierarchical consistency and maximize a new objective function that is related to the area under a hit curve. We refer to this new objective function as the Conditional expected Area under The Curve of Hit (\Objective\/). %In our case, the hit curve $h(i)$ represents the number of correct class labels among the first $i$ predicted class labels, where the predictions are made by first sorting all objects against all classes in descending order of their mLPRs and then labelling the top $i$ in the ranking as positive while labelling the others as negative (an object being labeled as positive/negative for a class indicates that the object is/is not  a member of that class). 
{\color{black} In our context, an object labeled positive for a class indicates that it is predicted to belong to that class, whereas a negative label indicates the opposite.  {\color{black} The piecewise hit curve, $h(x)$, is then defined as the number of correctly predicted positives among the first $r=\lceil x \rceil$ predicted positive labels.} To generate predictions, we sort all classification cases for all objects across all classes in descending order of their mLPRs. The top $r$ in this ranking are labeled positive, and the remainder are labeled negative.} %{\color{black} To address this challenge, we develop a strategy based on a new quantity, called the multidimensional Local Precision Rate (mLPR), for each object in each class, which is derived from the first-stage classifier scores and the given class hierarchy. Under certain conditions, we show that transforming the classifier scores into mLPRs and comparing mLPR values for all objects across all classes can, in theory, ensure hierarchical consistency and maximize a new objective function related to the area under a hit curve. We refer to this objective as the Conditional expected Area under The Curve of Hit (CATCH). To define the hit curve, we first clarify that an object labeled as positive for a class indicates that it is predicted to belong to that class, while a negative label indicates the opposite. Based on this, the hit curve $h(r)$ represents the number of correct positive class labels among the first $r$ predicted positive labels. Predictions are made by sorting all classification cases for all objects across all classes in descending order of their mLPRs. The top $r$ in the ranking are labeled as positive, while the remaining are labeled as negative.} 
This hit curve and \Objective\/ provide an intuitive representation of classification performance for a given ranking for classification decisions.

Despite the theoretical advantages of the above mLPR-based ranking and decision making strategy, applying this approach in practice faces a new challenge: {\color{black}the quantities that are required to compute mLPRs need to be estimated from the data;} sorting the estimated mLPRs in descending order might fail to guarantee the optimization of \Objective\/ or may violate the hierarchy constraint. 
To overcome this challenge, we introduce a new algorithm, HierRank, which, {\color{black}when applied to the estimated mLPRs,} produces a ranking of all objects against all classes by maximizing an empirical version of \Objective\/ (defined based on the estimated mLPRs) while respecting the class hierarchy constraint. Additionally, we propose a cutoff selection procedure %{\color{black}, which adheres to the class hierarchy constraint,} 
on the resulting ranking to control certain statistical properties of the final classification results, such as controlling the false discovery rate (FDR) at a target level or achieving the maximum $F_1$ score.

We compared our method to several state-of-the-art HMC methods using a synthetic dataset and two real datasets. %{\color{black} \sout{the alternative methods in terms of evaluation metrics such as precision, recall, and $F_1$ score} those methods with an increased accuracy in decision-making}. {\color{black} \sout{We also investigated how the accuracy of the mLPR estimation influences method’s performance.}}
{\color{black} Our approach demonstrated superior performance in decision-making, particularly at the start of the precision-recall curve where precision is prioritized, while performing at least comparably throughout the remainder of the curve.}

To summarize, our main contributions in this article {\color{black} are}: (i) We introduce a new approach for decision-making in HMC based on the newly introduced quantity mLPR and the objective function \Objective\/. We present desirable theoretical statistical properties of this strategy. (ii) We suggest several methods to estimate mLPRs in real-world application scenarios, {\color{black} and investigate the impact of the mLPR estimation on our method's performance. (iii) We introduce HierRank, {\color{black}a new algorithm that, when applied to the estimated mLPRs,} ranks all objects across all classes by maximizing an empirical version of \Objective\/ under the class-hierarchy constraint. (iv)} We compare our method to several state-of-the-art HMC methods using a synthetic dataset and two real datasets, demonstrating that our approach outperforms, %alternative methods on evaluation metrics such as precision, recall, and F1 score.
{\color{black}or at least matches, these methods across several evaluation metrics, including precision rates, recall rates, and {\color{black}$F_1$} scores.}

The rest of the paper is organized as follows. In Section \ref{sec:notation_model}, we introduce the notation and our model. In Section \ref{sec:obj_func}, we present the objective function \Objective\/ and the quantity mLPR, and show that sorting mLPRs in descending order can maximize \Objective\/ while respecting the hierarchy. In Section \ref{sec:methods}, we propose the ranking algorithm HierRank, which can sort objects by estimated mLPRs under the hierarchy constraint. Section \ref{sec:evaluation} reports the performance of our approach in comparison with a few other methods on a synthetic dataset and two case studies. Finally, we conclude the article in Section \ref{sec:hmc_conclusion}. Supplementary information can be found at the end of the article.

\section{Notation and Model}\label{sec:notation_model}
%%%%%%%%%%%%%%%%%%%%%%%%%%%%%%%%%%%%%%%%%%%%%%%%%%%%%%%%%%%%%%%%%%%%%%%%%%%%%%%%%%%%%%% 
%%%%%%%%%%%%%%%%%%%%%%%%%%%%%%%%%%%%%%%%%%%%%%%%%%%%%%%%%%%%%%%%%%%%%%%%%%%%%%%%%%%%%%%
%\subsection{Notation}\label{sec:notation}
%Suppose there are $K$ classes that are structured in a known hierarchical graph $\mG$ as, for example, in Figure \ref{fig:model_graph}~(a). The graph can be disconnected and consist of multiple connected components, as in the example shown in Figure \ref{fig:model_graph}~(a).  In $\mG$, $pa(k)$ denotes the set of the parent nodes of the $k$th node, $anc(k)$ denotes the set of its ancestor nodes, and $nbh(k)$ denotes the set of its parents and direct children. In the example shown in the figure, $pa(\text{F}) = \{\text{C}\}$, $anc(\text{F}) = \{\text{A}, \text{C}\}$, and $nbh(\text{F}) = \{\text{C}, \text{G}, \text{I}\}$. 
Suppose there are $K$ classes that are structured in a known hierarchical graph $\mG$, {\color{black} which is assumed to be a single tree, or a set of disjoint trees (e.g., Figure \ref{fig:model_graph}~(a)), or a directed acyclic graph (DAG). The difference between a tree and a DAG is that each node in a tree has at most one parent, whereas a node in a DAG can have multiple parents.}  In $\mG$, $pa(k)$ denotes the set of the parent nodes of the node $k$, $anc(k)$ denotes the set of its ancestor nodes, and $nbh(k)$ denotes the set of its parents and direct children. In the example shown in {\color{black} Figure \ref{fig:model_graph}~(a)}, $pa(\text{F}) = \{\text{C}\}$, $anc(\text{F}) = \{\text{A}, \text{C}\}$, and $nbh(\text{F}) = \{\text{C}, \text{G}, \text{I}\}$. {\color{black} In this article, we focus mainly on the tree structure; the extension of the results to the DAG structure is discussed in Supplement B.4.}

\begin{figure}[tp]
  \centering
  \begin{minipage}{0.49\linewidth}
    \centering
    \includegraphics[width=0.7\linewidth]{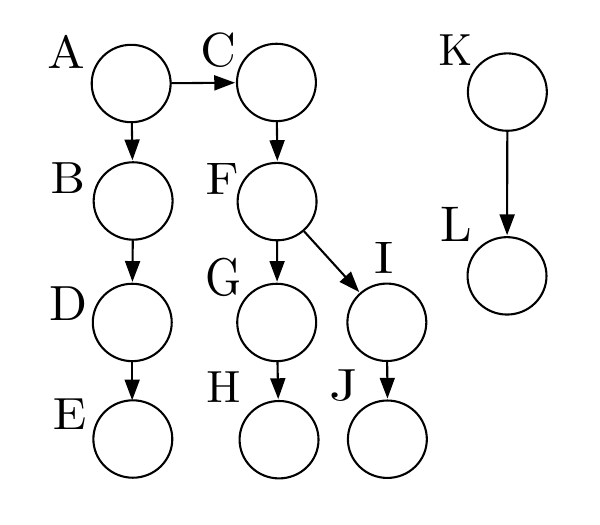}
   \subfloat[An example hierarchy $\mG$]{\hspace{.55\linewidth}}    
  \end{minipage}
  \begin{minipage}{0.49\linewidth}
      \centering
      \includegraphics[width=0.7\linewidth]{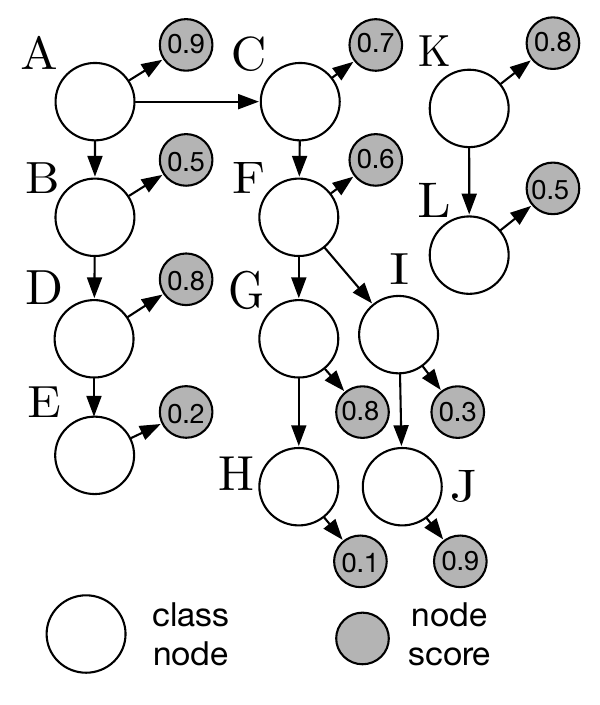}
  \subfloat[The augmented graph $\bmG$]{\hspace{.55\linewidth}}      
  \end{minipage}
  \caption{Example hierarchical graph $\mG$ and its associated augmented graph $\bmG$.}
  \label{fig:model_graph}
\end{figure}

{\color{black} 
%For the $K$ nodes in $\mG$, $K$ classification decisions need to be made for each object. For an object, we denote its membership status in class $k$ by $Y_k$ ($k=1,...,K$), with $Y_k=1$ indicating that the object is a member of the class, while $Y=0$ indicating otherwise. Additionally, each object is associated with a score $S_k$ for class $k$ (e.g., $S_k$ may represent a pre-given first-stage classifier score). The membership labels of the object in the $K$ classes are thus denoted by $\bdY:=(Y_1, \ldots, Y_K)^{\tranp}$, and the associated scores are denoted by $\bdS:=(S_1, \ldots, S_K)^{\tranp}$.
For the $K$ nodes/classes in $\mG$,
$K$ classification decisions need to be made for each object. The membership status of an object in class $k$ ($k=1,...,K$) is denoted by $Y_k$, where $Y_k=1$ indicates that the object belongs to the class, and $Y_k=0$ indicates otherwise. Additionally, each object is associated with a node classification score $S_k$ for class $k$ (e.g., $S_k$ may represent the pre-given first-stage classifier score). As noted in Section \ref{sec:intro}, in the two-stage approach to HMC problems, a separate classifier is first trained for each of the $K$ classes, ignoring the class hierarchy. When applied to an object, each classifier then produces a score $S_k$, which is a statistic serving as evidence of that object’s membership in class $k$. The membership labels of the object across the $K$ classes are represented as $\bdY:=(Y_1, \ldots, Y_K)^{\tranp}$, and the classification scores for the $K$ classes are represented by $\bdS:=(S_1, \ldots, S_K)^{\tranp}$. Accordingly, we consider an augmented graph $\bmG$ (e.g., Figure \ref{fig:model_graph}~(b)) of $\mG$.}

%{\color{black} \sout{The graph $\mG$ for the $K$ classes is assumed to be a tree or a set of tress or a DAG, where each node in a tree has only one parent while that in a DAG can have multiple parents. The graph can be disconnected and consist of multiple connected components, as in the example shown in Figure \ref{fig:model_graph}~(a). In this article, we focus mainly on the tree structure; the extension to the DAG structure is discussed in Supplement B.4. Besides the characterization of labels $\bdY:=(Y_1, \ldots, Y_K)^{\tranp}$ by $\mG$, we also need to characterize the relationship between $\bdY$ and $\bdS:=(S_1, \ldots, S_K)^{\tranp}$, where $Y_k$ and $S_k$ respectively represent a given random object's true status and pre-given classifier score for the $k$th class. To this end} To characterize the relationship between $\bdY$ and $\bdS$}, we make an assumption of conditional independence (Assumption \ref{asmpt:cond_indpt}), {\color{black} \sout{which is a standard assumption in Bayesian networks and is commonly used in hierarchical multi-label classification [8,5].} which particularly fits the framework of two-stage classification as it trains a classfier of one class independently of others. It is also a standard assumption in Bayesian networks and is commonly used in hierarchical multi-label classification \citep{cesa2006hierarchical, bi2015bayes}.} Based on Assumption \ref{asmpt:cond_indpt}, we consider an augmented graph $\bmG$ (Figure \ref{fig:model_graph}~(b)) of $\mG$.

{\color{black} For a random object, we assume $Y_k$ is a random binary variable with a membership probability of $\bP(Y_k=1)$, and that $S_k$ has the cumulative distribution function (CDF) $F_{0,k}$ {\color{black} given} $Y_k=0$ and $F_{1,k}$ {\color{black} given} $Y_k=1$. %Additionally, we assume conditional independence (Assumption \ref{asmpt:cond_indpt}) between $S_k's$ and $Y_k's$, as detailed below.} 
Additionally, we assume {\bf conditional independence} between $S_1,...,S_K$ and $Y_1,...,Y_K$. That is, 
%to connect $\bdY$ and $\bdS$,  which particularly fits the framework of two-stage classification as it trains a classifier of one class independently of others. 
given $Y_k$, we assume that $S_k$ is independent of other scores and other labels, or equivalently,}
%\begin{assumption}\label{asmpt:cond_indpt}
  %{\color{black} \sout{For each object, given $Y_k$, $S_k$ is conditionally independent of other scores and other labels.} Let $Y_k$, $S_k$ be the true label and pre-given score of an object on class $k$. When $Y_k$ is given, $S_k$ is conditionally independent of other scores and other labels. That is,}
 % {\color{black} %Let $Y_k$, $S_k$ be the true label and pre-given score of an object on class $k$. 
 % Given $Y_k$, $S_k$ is independent of other scores and other labels. That is,}
\begin{itemize}
\item [\textup{(i)}] {\color{black} $\bP (S_k = s| S_1, \ldots, S_{k-1}, S_{k+1}, \ldots, S_K, Y_1, \ldots, Y_K) = \bP (S_k = s| Y_k)$.}  
  %  $$S_{k'} \perp S_{k} | Y_k \text{~~and~~} Y_{k'} \perp S_{k} | Y_{k} \text{~~for any~~} k' \neq k,$$
\end{itemize}
%\end{assumption}
{\color{black} This assumption is reasonable, particularly within the two-stage HMC framework, where the first-stage classifiers are trained separately for each class. It is also a standard assumption in Bayesian networks and is commonly used in HMC \citep{cesa2006hierarchical, bi2015bayes} to simplify the model and ensure computational tractability, though its appropriateness depends on the specific context and the methods employed.} {\color{black} In the disease-diagnosis setting, the assumption is especially reasonable when each disease class has independent, representative training samples and the key features used by each classifier do not overlap. For example, it is plausible for distantly related diseases, such as Parkinson’s disease and type II diabetes, whose underlying biological markers and clinical presentations share little in common.}
%Based on Assumption \ref{asmpt:cond_indpt}, we consider an augmented graph $\bmG$ (Figure \ref{fig:model_graph}~(b)) of $\mG$.

    %{\color{black} \sout{To summarize, labels $\bdY$ and scores $\bdS$ have the following three properties:}}\\
    
{\color{black} Note that the class hierarchical constraint dictates that if an object has a negative membership in a class/node, it must also have negative memberships in all of that class/node's descendants. That is, }
%for a random object, labels $\bdY$ and scores $\bdS$ also have the following two properties:}
\begin{itemize}
%\item [(i)] {\color{black} \sout{$\bP (S_k = s| S_1, \ldots, S_{k-1}, S_{k+1}, \ldots, S_K, Y_1, \ldots, Y_K) = \bP (S_k = s| Y_k)$.}}  
%  $$S_{k'} \perp S_{k} | Y_k \text{~~and~~} Y_{k'} \perp S_{k} | Y_{k} \text{~~for any~~} k' \neq k,$$ 
\item [(ii)] $\bP (Y_k = 1| Y_{pa(k)} = 0) = 0$.
\end{itemize}

{\color{black}We'd also like to note the below property based on the earlier distributional assumptions on $S_k$:}
\begin{itemize}
 \item [(iii)] $\bP(S_k \leq s) = \bP(S_k \leq s, Y_k=0) + \bP(S_k \leq s, Y_k=1) = F_{0,k}(s) \bP (Y_k = 0)+ F_{1,k}(s) \bP (Y_k = 1)$, where $F_{0,k}$ denotes the null CDF $\bP(S_k \leq s|Y_k=0)$, and $F_{1,k}$ denotes the alternative CDF $\bP(S_k \leq s|Y_k=1)$.
\end{itemize}

{\color{black}When observing $M$ objects, we denote the class membership and associated classification score of object $m$ for class $k$ by $Y_{k}^{(m)}$ and $S_{k}^{(m)}$, respectively. Additionally, we represent $\bdY^{(m)}$ as $(Y_{1}^{(m)}, \ldots, Y_{K}^{(m)})^{\tranp}$, and $\bdS^{(m)}$ as $(S_{1}^{(m)}, \ldots, S_{K}^{(m)})^{\tranp}$. For simplicity, we vectorize $(\bdY^{(1)}, \ldots, \bdY^{(M)})$ and $(\bdS^{(1)}, \ldots, \bdS^{(M)})$ to obtain} $$\bdvY = (Y_1^{(1)}, \ldots, Y_K^{(1)}, \ldots, Y_1^{(M)}, \ldots, Y_K^{(M)})^{\tranp}$$ and $$\bdvS = (S_1^{(1)}, \ldots, S_K^{(1)}, \ldots, S_1^{(M)}, \ldots, S_K^{(M)})^{\tranp},$$ respectively. Letting $i = (m - 1) \cdot K + k$, where $m$ denotes the object index and $k$ the class index, we redefine $Y_i := Y_k^{(m)}$, leading to $\bdvY = (Y_1, \ldots, Y_n)^{\tranp}$ and $\bdvS = (S_1, \ldots, S_n)^{\tranp}$ with $n=M\times K$, when there is no ambiguity.

{\color{black} We refer to a situation where $Y_i = 1$ as ``positive Event $i$'' (or simply ``Event $i$'' when there is no risk of confusion). For ease of notation, throughout the paper, we define that Event $i$, where $i = (m-1)\cdot K + k$, is an ancestor of Event $i'$ (or, equivalently, $i \in anc(i')$) if both events $i$ and $i'$ concern the same object $m$ and the node/class associated with Event $i$ is an ancestor node/class of that of Event $i'$. } 
{\color{black} We define a ranking $\bdpi = (\pi_1, \ldots, \pi_n)$ on $n$ events as a permutation of \{$1,\ldots, n$\}. Here, $\pi_i$, a simplified notation for $\pi(i)$, denotes the rank assigned to event $i$, {\bf with no ties}, %as a permutation of $(1, 2, \ldots, n - 1, n)$, {\color{black} 
and $\pi^{-1}(r)$ represents the index of the event with rank $r$.} A ranking $\bdpi$ is said to have \textbf{hierarchical consistency}, or to be a \textbf{topological ordering} for $\mG$ if it satisfies 
\begin{equation*}
  \pi_i < \pi_{i'} \text{~ for any Event $i$ that is an ancestor of Event $i'$.}\label{eq:hierarchy_property}
\end{equation*}

\section{A Ranking Strategy based on the Multidimensional Local Precision Rates (mLPRs)}\label{sec:obj_func}
Just to repeat, we need to decide the class labels for $M$ objects in $K$ classes, where the $K$ classes are organized in a tree or a set of disjoint trees. This leads to a total of $n = M \times K$ (dependent) classification events for us to consider. %Given all $n$ individual classifier scores, we aim to produce a decision ranking of the $n$ events that respects the class hierarchy constraint and meanwhile achieves the best possible classification performance as evaluated by some statistically meaningful performance measure.

%In this section, we first introduce the objective function \Objective\/, whose definition is based on the hit curve. Optimizing \Objective\/ leads to the proposed quantity, mLPR. We conclude the section by showing that mLPR has some attractive statistical properties and in particular that sorting mLPRs in descending order not only provides the optimal classification performance in terms of \Objective\/ but guarantees hierarchical consistency as well.

\subsection{A New Objective Function: Conditional expected Area under The Curve of Hit (\Objective\/)}\label{sec:obj_mLPR}

We leverage the area under the hit curve to formulate an objective function. In our framework, we define the {\color{black} piecewise} hit curve for a ranking $\bdpi$ of $n$ events as the non-decreasing function 
{\color{black}
$$h_{\bdpi}: (0,n] \rightarrow \{0, ..., n\}, \quad
h_{\bdpi}(x) = \sum_{j=1}^n \bI (Y_{\pi^{-1}(j)}=1) \bI( j \leq \lceil x \rceil ),$$ for any $x\in(0,n]\subset \mathbb{R}$.} Here, $Y_i \in \{1, 0\}$ is the true label of event $i$. We abbreviate $h_{\bdpi}$ as $h$ when $\bdpi$ is clear from context. By predicting the top $r$ events in the ranking $\bdpi$ as positive and the remaining $n - r$ events as negative, {\color{black}the value of $h(x)$ for $x\in(r-1,r]$ gives the number of correctly predicted positive labels among the first $r$ predicted positives. As $x$ ranges over $(0,n]$, $h(x)$ provides an intuitive summary of the overall classification performance of $\bdpi$.}
 
 Note that a hit curve can be easily converted into a precision–recall (PR) curve if the total number of hits, or true positives ($q$), is known: $\text{precision} = h(r)/r$; $\text{recall} = h(r)/q$. Another desirable property of {\color{black} the} hit curve is that they are more sensitive to distinguishing between different rankings for assessing their classification performance when the number of true positives is tiny relative to the total number of decisions to be made \citep{davis2006, herskovic2007, hand2009}. Figure S1 in Supplement A provides an example of a hit curve.

Among all of the potential rankings that respect the class hierarchy, we aim to find the one that maximizes the area under the hit curve. The area under the hit curve $h(x)$ when $x \in (r-1, r]$ is the number of true positives among the first $r$ predicted positives. Therefore, given a ranking $\pmb{\pi}$ of the $n$ events being considered, the total area under the corresponding hit curve (AUHC) can be expressed as
\begin{equation}\label{eq:AUC_hit}
  \text{AUHC}(\pmb{\pi}; \bdvY) = \sum_{r=1}^n \sum_{j=1}^r \bI (Y_{\pi^{-1}(j)} = 1) = \sum_{r=1}^n (n-r+1)\bI(Y_{\pi^{-1}(r)} = 1 ),
\end{equation}
where $\pmb{\pi}$ refers to a ranking of all of the $n$ events considered, as defined in Section \ref{sec:notation_model}. We consider the population mean of the area under the hit curve. Specifically, given the classifier scores $\bdvS$ and the class hierarchy in $\mG$, we compute the conditional expected values of \eqref{eq:AUC_hit}, and we obtain

\begin{eqnarray}
  \text{\Objective\/}(\pmb{\pi}; \bdvS, \mG) &:=& \bE ({\color{black} \text{AUHC}}(\pmb{\pi}; \bdvY) | \bdvS, \mG )\nonumber\\
  &=& \sum_{r=1}^n (n-r+1) \bP(Y_{\pi^{-1}(r)} = 1 | S_1, \ldots, S_n, \mG)
  %\nonumber\\
  %&=& \sum_{r=1}^n (n-r+1) mLPR_{\pi^{-1}(r)}
  . \label{eq:cobj}
\end{eqnarray}
This is our proposed objective function, the \textbf{Conditional expected Area under The Curve of Hit (\Objective\/)}. %When no confusion arises, we omit $\pmb{\pi}$,  $\mG$ or $\bdvS$ in $\text{\Objective\/}(\pmb{\pi}; \bdvS, \mG)$ for simplicity.

\subsection{The Multidimensional Local Precision Rate and its Properties}\label{sec:mLPR_property}
{\color{black} We denote the random variable introduced in Eq. \eqref{eq:cobj} as \textbf{multidimensional local precision rate (mLPR)}. That is, for Event $i$, 
$$mLPR_i := \bP(Y_i = 1 | S_1, \ldots, S_n, \mG).$$
%refer to $\bP(Y_i = 1 | S_1, \ldots, S_n)$ in Eq. \eqref{eq:cobj} 
}For simplicity, when no confusion arises, we omit $\pmb{\pi}$, $\mG$, or $\bdvS$ in $\text{\Objective}(\pmb{\pi}; \bdvS, \mG)$ and $\mG$ in $\bP(Y_i = 1 | S_1, \ldots, S_n, \mG)$. We now rewrite Eq. \eqref{eq:cobj} as $$\Objective\/ = \sum_{r=1}^n (n-r+1) mLPR_{\pi^{-1}(r)}.$$

%In the remainder of this paper, we omit $\mG$ in %$\bP(Y_i = 1 | S_1, \ldots, S_n, \mG)$ and

The mLPR quantity extends the local precision rate \citep{jiang2014} by considering the class hierarchy in addition to the differences among classes. Moreover, it lends itself to a Bayesian interpretation: under certain probabilistic assumptions, it is equivalent to the multidimensional local true discovery rate (m$\ell$tdr) used in hierarchical hypothesis testing \citep{ploner2006multidimensional}. In this subsection, we will discuss the desirable properties of the mLPR quantity within the HMC framework.
    % We note that when $S_i$ and $S_{i'}$ correspond to different classes, they may follow different distributions and hence are not directly comparable, which can lead to poor classification decisions on some classes when not accounted for properly (see Supplement \ref{appendix:class_stat_diff} for a detailed discussion). Therefore, when joint decisions are made based on mLPRs, we need to make sure such distinct statistical distributions across classes have been taken into consideration.}
  % For a fixed class $k$, scores $\bdS^{(m)} := \{S_{k}^{(m)}: m = 1, \ldots, M \}$ follow the same mixture distribution. If $k$ and $k^{\prime}$ correspond to different nodes in $\mG$, $S_k^{(m)}$ and $S_{k^{\prime}}^{(m')}$ may follow different distributions and hence are not directly comparable (see Supplement \ref{appendix:class_stat_diff} for a detailed discussion). Therefore, when joint decisions are made across all nodes, such distinct statistical distributions  across classes need to be taken into consideration.

%Proposition \ref{prop:hierarchy_consistency}, given below, {\color{black} shows} that the mLPR value of an event cannot be less than those of its descendants. %This proposition implies that mLPRs can ensure hierarchical consistency.

\begin{proposition}\label{prop:hierarchy_consistency}
 Under the hierarchical constraint, for two events $i$ and $i'$, if $i \in anc(i')$, then $mLPR_i \geq mLPR_{i'}$. 
\end{proposition}
\begin{proof}
  By the definition of $anc(\cdot)$ in Section \ref{sec:notation_model}, if $i \in anc(i')$, then %for any $i'$ and $i \in anc(i')$, 
  the two events $i$ and $i'$ concern the same object and the associated class node of Event $i$ is an ancestor of that of Event $i'$. It follows that
  \begin{eqnarray*}
   && mLPR_{i'} \nonumber\\
   &=& \bP(Y_{i'} =1 | S_1, \cdots, S_n)\nonumber\\
   % &=&  {\color{black} \sout{\sum_{\substack{\text{all $Y_1, \cdots, Y_n$}\\ \text{except $Y_{i'}$}}} \bP(Y_1, \cdots, Y_{i'} = 1, \cdots, Y_n | S_1, \cdots, S_n)}}\nonumber\\
    &=&  {\color{black} \sum_{\substack{\text{$y_j \in \{0,1\}$,}\\ \text{$\forall j\in [n]/\{i'\}$}}} \bP(Y_1=y_1, \cdots, Y_{i'} = 1, \cdots, Y_n=y_n | S_1, \cdots, S_n)}\nonumber\\
    %&\overset{(a)}{=}& {\color{black} \sout{\sum_{\substack{\text{all $Y_1, \cdots, Y_n$}\\ \text{except $Y_i, Y_{i'}$}}} \bP(Y_1, \cdots, Y_{i} = 1, \cdots, Y_{i'} = 1, \cdots, Y_n | S_1, \cdots, S_n)}}\nonumber\\
    &\overset{(a)}{=}& {\color{black} \sum_{\substack{\text{$y_j \in \{0,1\}$,}\\ \text{$\forall j\in [n]/\{i', i\}$}}} \bP(Y_1=y_1, \cdots, Y_{i} = 1, \cdots, Y_{i'} = 1, \cdots, Y_n=y_n | S_1, \cdots, S_n)}\nonumber\\
   % &\leq& {\color{black}\sout{\sum_{\substack{\text{all $Y_1, \cdots, Y_n$}\\ \text{except $Y_i$}}} \bP(Y_1, \cdots, Y_{i} = 1,\cdots, Y_n| S_1, \cdots, S_n)}}\nonumber\\
    &\leq& {\color{black}\sum_{\substack{\text{$y_j \in \{0,1\}$,}\\ \text{$\forall j\in [n]/\{i\}$}}} \bP(Y_1=y_1, \cdots, Y_{i} = 1,\cdots, Y_n=y_n| S_1, \cdots, S_n)}\nonumber\\
    &=& mLPR_{i},
  \end{eqnarray*}
where $(a)$ is obtained by the hierarchical consistency mathematically defined as Property (ii) in Section \ref{sec:notation_model}.
\end{proof}

Proposition \ref{prop:hierarchy_consistency} {\color{black} shows} that the mLPR value of an event cannot be less than those of its descendants. {\color{black} Proposition \ref{prop:decision_consistency} below further states that a positive event with a higher mLPR is more likely to occur than one with a lower mLPR, thus providing a statistical justification for directly comparing mLPR values to rank the events. Note that all the probabilities below are defined in accordance with the class hierarchy.
}
%provides that an event with a higher mLPR is more likely to occur than an event with a lower mLPR. It shows that the mLPR quantity has taken into account statistical difference of the given classifier scores among different classes, thus giving a statistical justification of directly comparing mLPRs.

\begin{proposition}\label{prop:decision_consistency}
For any events $i$ and $i'$ with  $mLPR_i \geq mLPR_{i'}$, we have
  $$\bP (Y_i = 1| mLPR_i \geq mLPR_{i'}) \geq \bP(Y_{i'} = 1 | mLPR_i \geq mLPR_{i'}).$$
\end{proposition}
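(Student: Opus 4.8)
The plan is to reduce both sides of the claimed inequality to conditional expectations of the mLPR values themselves, after which the conclusion follows almost immediately from the shape of the conditioning event. The crucial observation is that each $mLPR_i = \bP(Y_i = 1 \mid S_1, \ldots, S_n)$ is a deterministic function of the score vector $\bdvS$ alone; consequently the conditioning event $A := \{mLPR_i \geq mLPR_{i'}\}$ is measurable with respect to the $\sigma$-algebra $\sigma(\bdvS)$ generated by the scores.

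Given this, the first step is to apply the tower property to each side. Writing $\bP(Y_i = 1 \mid A) = \bE[\bI(Y_i = 1)\bI(A)]/\bP(A)$ and conditioning the numerator on $\bdvS$, I can pull $\bI(A)$ outside the inner conditional expectation (legitimate precisely because $A \in \sigma(\bdvS)$), which yields $\bE[\bI(Y_i = 1)\bI(A)] = \bE[\bI(A)\, mLPR_i]$. Dividing through by $\bP(A)$ then gives the clean identity
$$\bP(Y_i = 1 \mid A) = \bE[mLPR_i \mid A],$$
and by the identical argument $\bP(Y_{i'} = 1 \mid A) = \bE[mLPR_{i'} \mid A]$.

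The remaining step is purely order-theoretic. On the event $A$, by its very definition $mLPR_i \geq mLPR_{i'}$ holds pointwise, so monotonicity of conditional expectation gives $\bE[mLPR_i \mid A] \geq \bE[mLPR_{i'} \mid A]$, which is exactly the desired inequality.

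I expect the only genuine subtlety, rather than a real obstacle, to be the careful justification of the tower-property step, namely confirming that the conditioning event lies in the score $\sigma$-algebra so that $\bI(A)$ commutes with $\bE[\,\cdot \mid \bdvS]$. Once that measurability point is settled, no structural property of $\mH$ beyond the definition of mLPR is actually used; in particular, and consistent with the proposition's phrasing ``regardless of the dependencies,'' neither the conditional-independence assumptions nor the hierarchy enter this argument at all.
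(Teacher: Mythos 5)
Your proof is correct and takes essentially the same approach as the paper: both arguments rest on the fact that the event $\{mLPR_i \geq mLPR_{i'}\}$ is determined by $\bdvS$, on which $\bP(Y_i = 1 \mid \bdvS) = mLPR_i \geq mLPR_{i'} = \bP(Y_{i'} = 1 \mid \bdvS)$ holds pointwise, and then integrate this dominance over that event before normalizing by its probability. Your tower-property formulation, $\bP(Y_i = 1, A) = \bE[\bI(A)\, mLPR_i]$, is simply a more measure-theoretically explicit rendering of the paper's integral $\int_{\bdvs:\, mLPR_i \geq mLPR_{i'}} \bP(Y_i = 1, \bdvS = \bdvs)\, d\bdvs$.
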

\begin{proof}
%{\color{black} \sout{By definition, $mLPR_i = \bP(Y_i=1 | \bdvS)$, so $mLPR_i(\bdvs) = \bP(Y_i=1 | \bdvS=\bdvs)$ for a realization $\bdvs$ of $\bdvS$.}} % Thus, $mLPR_i(\bdvs) \geq mLPR_{i'}(\bdvs)$ implies $\bP(Y_i=1 | \bdvS = \bdvs) \geq \bP(Y_{i'}=1 | \bdvS = \bdvs)$.
By definition, $mLPR_i = \bP(Y_i=1 | \bdvS)$. Then, the desired result follows from
\begin{eqnarray*}
  && \bP(Y_i = 1, mLPR_i \geq mLPR_{i'})\\
   &=& {\color{black}  \bE [\mathbb{I}(Y_i = 1) \mathbb{I}( mLPR_i \geq mLPR_{i'})]}\\
  &\overset{(a)}{=}& {\color{black} \bE \{\bE [\mathbb{I}(Y_i = 1) \mathbb{I}( mLPR_i \geq mLPR_{i'})| \bdvS]\}}\\
  %&=& {\color{black} \sout{\int_{\bdvs} \bP[Y_i = 1, mLPR_i(\bdvs) \geq mLPR_{i'}(\bdvs), \bdvS=\bdvs] d{\bdvs}}}\\
  &\overset{(b)}{=}& {\color{black} \bE \{\bE [\mathbb{I}(Y_i = 1)| \bdvS] \mathbb{I}( mLPR_i \geq mLPR_{i'})\}}\\
  %&=& {\color{black} \sout{\int_{\bdvs: mLPR_i(s) \geq mLPR_{i'}(s)} \bP(Y_i = 1, \bdvS = \bdvs)d{\bdvs}}}\\
  &=& {\color{black} \bE \{mLPR_i \cdot \mathbb{I}( mLPR_i \geq mLPR_{i'})\}}\\  
  %&\overset{(a)}{\geq}& {\color{black}\sout{ \int_{\bdvs: mLPR_i(s) \geq mLPR_{i'}(s)} \bP(Y_{i'} = 1, \bdvS = \bdvs) d{\bdvs}}}\\
  &\geq& {\color{black} \bE  \{mLPR_{i'} \cdot \mathbb{I}( mLPR_i \geq mLPR_{i'})\}}\\  
  %  &=& \int_{\bdvs} \bP[Y_{i'} = 1, mLPR_i(\bdvs) \geq mLPR_{i'}(\bdvs), \bdvS=\bdvs] d{\bdvs}\\
  &=& {\color{black} \bE \{\bE [\mathbb{I}(Y_{i'} = 1)| \bdvS] \mathbb{I}( mLPR_i \geq mLPR_{i'})\}}\\  
  &=&\bP(Y_{i'} = 1, mLPR_i \geq mLPR_{i'}),
\end{eqnarray*}
{\color{black} where (a) holds by law of total expectation and (b) holds because $mLPR_i$ and $mLPR_{i'}$ are functions of $\bdvS$.}
%{\color{black} \sout{where (a) holds because $mLPR_i(\bdvs) \geq mLPR_{i'}(\bdvs)$ implies $\bP(Y_i=1, \bdvS = \bdvs) \geq \bP(Y_{i'}=1,  \bdvS = \bdvs)$.}}
\end{proof}

Proposition \ref{prop:hierarchy_consistency} implies that mLPRs can ensure hierarchical consistency, and Proposition \ref{prop:decision_consistency} shows that events with higher mLPRs are more likely to be true positives. Together, these propositions suggest that sorting mLPRs can achieve effective classification results while adhering to the given hierarchy.

\subsection{A Ranking Strategy Based on mLPRs}\label{sec:mLPR_sorting}
We first consider a simple strategy, called \textbf{NaiveSort}, {\color{black} which 
sorts events by their associated values (e.g., classification scores, mLPRs, estimated mLPRs) in descending order.} %which ranks events by directly sorting  {\color{black} certain values (e.g., classification scores and mLPRs)} associated with the events in descending order.
{\color{black}When the values are tied, \textbf{NaiveSort} ranks the events according to the class hierarchy. %For tied events without hierarchical constraints, the algorithm ranks them randomly, a situation that rarely occurs in practice.}
 %Applying \textbf{NaiveSort} to mLPRs to rank the events lead to the following corollary. %Proposition \ref{prop:hierarchy_consistency} indicates that the ranking $\pmb{\pi}$ obtained by applying NaiveSort to mLPRs satisfies hierarchical consistency, and Proposition \ref{prop:decision_consistency} implies that this ranking maximizes $\Objective\/(\bdpi; \bdvS)$. Stated more specifically, by the definition of \Objective\/ \eqref{eq:cobj}, its maximum can be obtained by sorting mLPRs in descending order. Therefore, \textbf{NaiveSort} gives a ranking that solves the following constrained optimization problem. 

%We summarize this result as Corollary \ref{corr:naive_mlpr}.
\begin{corollary}\label{corr:naive_mlpr}
 A ranking $\pmb{\pi}$, produced by \textbf{NaiveSort} based on mLPRs, or equivalently by sorting mLPRs in descending order to rank the events, is a solution to the following constrained optimization problem. 
  \begin{eqnarray}
  \max_{\bdpi} && \Objective\/(\pmb{\pi}; \bdvS, \mG) \label{opt:max_cobj}\\
  \text{s.t.} && \text{$\pmb{\pi}$ is hierarchically consistent}.\nonumber
\end{eqnarray}
%\eqref{opt:max_cobj}.
\end{corollary}
\begin{proof}
  {\color{black} %By Eq. \eqref{eq:cobj} and that $mLPR_i := \bP(Y_i=1 | \bdvS)$, \textbf{NaiveSort}  maximizes $CATCH = \sum_{i=1}^n (n-i+1)mLPR_{\pi^{-1}(i)}$, as NaiveSort assigns the $i$-th largest weight, $(n-i+1)$, to the event with the $i$-th largest $mLPR$. 
  Given $CATCH = \sum_{r=1}^n (n-r+1)mLPR_{\pi^{-1}(r)}$, where $\pi^{-1}(r)$ represents the index of the event ranked $r$ as defined in earlier sections, $CATCH$ can be maximized if $\pmb{\pi}$ represents the ranking of events in descending order of their $mLPRs$. Specifically, the event with the largest $mLPR$ is assigned the highest rank ($r=1$), receiving the largest weight of $n$, while the event with the second-largest $mLPR$ is ranked second, receiving the next largest weight of $n-1$, and so on. 
  This process ensures that the weights are optimally aligned with the $mLPR$ values, thereby maximizing $CATCH$. Therefore, \textbf{NaiveSort}, which sorts events by $mLPR$ in descending order, achieves this maximization. 
  
  Next we show that the ranking $\pmb{\pi}$ produced by \textbf{NaiveSort} {\color{black}based on $mLPR$s} is hierarchically consistent. Specifically, we need to prove that for two events $i$ and $i'$, if $i \in anc(i')$, then $\pi_i < \pi_i'$. We will prove this using the method of contradiction. 
  
  Assume $i \in anc(i')$, but  $\pi_i > \pi_i'$.
  According to the definition of \textbf{NaiveSort}, this implies $mLPR_i \leq mLPR_i'$. Now, consider the two possible cases: 
  \begin{itemize}
      \item When $mLPR_i < mLPR_i'$: By Proposition \ref{prop:hierarchy_consistency}, $i$ cannot be an ancestor of $i'$, which contradicts the assumption $i \in anc(i')$. 
      \item When $mLPR_i = mLPR_i'$: In this case, \textbf{NaiveSort} resolves ties by ranking $i$ and $i'$ according to the class hierarchy. Therefore, if $\pi_i > \pi_i'$, $i$ cannot be an ancestor of $i'$, again contradicting $i \in anc(i')$. 
  \end{itemize}
 Thus, in both cases, we arrive at a contradiction. Therefore, the ranking given by \textbf{NaiveSort} {\color{black} based on $mLPR$s} is hierarchically consistent, which completes the proof.}
\end{proof}

\section{A Ranking Algorithm Based on  Estimated mLPRs}\label{sec:methods}

\textbf{NaiveSort} solves the constrained optimization problem in Eq. \eqref{opt:max_cobj},  when true mLPR values are available. In practice, however, mLPRs must be estimated from data. Using \textbf{NaiveSort} with estimated mLPRs does not guarantee adherence to the class hierarchy, let alone the maximization of \Objective. To address this, we propose a ranking method that{\color{black}, when applied to the estimated mLPRs,} maximizes an empirical approximation of \Objective\ while maintaining consistency with the hierarchy.
%works for the constraint optimization \eqref{opt:max_cobj}, which hinges on true mLPR values. In practice, we need to estimate mLPRs. Sorting events by NaiveSort based on the estimated mLPRs would not guarantee the class hierarchy any more. To address this issue, we develop a ranking method for maximizing an empirical value of \Objective\/ (which is obtained by plugging the estimated mLPRs) while respecting the hierarchy.% It cannot optimize the empirical version of \Objective\/ either, which is obtained by plugging the estimated mLPRs. Hence, a new ranking algorithm is needed to work with the estimated mLPRs.
Before introducing the detailed algorithm in Section \ref{sec:algo}, we first explain in Section \ref{sec:mLPR_estimation} how mLPRs are estimated using the given classifier scores and class hierarchy graph. %Notably, a training set with objects labeled across classes and their corresponding classifier scores is needed to estimate key quantities required for approximating mLPRs.
%Before introducing the detailed algorithm in Section \ref{sec:algo}, we describe in Section \ref{sec:mLPR_estimation} how we estimate mLPRs from the observed classifier scores and a given hierarchical graph. Note that a training set of objects with classifier scores and true labels across classes are needed to train models to estimate/approximate several key components for mLPRs.  %Finally, we summarize the procedure for the method as a whole and provide a cutoff selection approach  for making decisions based on the resulting ranking.

\subsection{Estimation of mLPRs}\label{sec:mLPR_estimation}
%In real-world scenarios, it is important to acknowledge that genuine mLPRs are not directly observable, necessitating the use of estimated mLPRs. To facilitate a clearer comprehension, mLPRs can be conceptualized as a transformation of the acquired classifier scores, akin to the role of p-values within a hypothesis testing framework. In the context of hypothesis testing analysis, test statistics (equivalent to classifier scores in our context) are computed from sample data. Following this, p-values (analogous to mLPRs in our context) are subsequently {\color{black} calculated} based on the same dataset and the derived test statistics.

%Note that $mLPR_i := \bP(Y_i = 1| S_1, \cdots, S_n) = \sum_{Y_1, \cdots, Y_{i - 1}, Y_{i+1}, \cdots, Y_n} \bP(Y_1,  \cdots, Y_i=1, \cdots, Y_n | S_1, \cdots, S_n)$. 
We start with computing $\bP(Y_1, \cdots, Y_n | S_1, \cdots, S_n)$:
\begin{eqnarray}
  \bP(Y_1 , \cdots, Y_n | S_1, \cdots, S_n) &\overset{(a)}{\propto}& \bP(S_1, \cdots, S_n | Y_1, \cdots, Y_n)  \cdot \bP(Y_1, \cdots, Y_n)\nonumber\\
  &\overset{(b)}{=}& \prod_{i=1}^n \bP(S_i | Y_i) \bP (Y_i | Y_{pa(i)})\nonumber\\
  % &\overset{(c)}{\propto}& \prod_{i=1}^n \frac{\bP(Y_i | S_i)}{\bP(Y_i)} \cdot \bP(Y_i | \pmb{\tilde{Y}}_{pa(i)})\nonumber\\
  &\overset{(c)}{\propto}& \prod_{i=1}^n \bP(Y_i | S_i) \cdot \frac{\bP(Y_i | Y_{pa(i)})}{\bP(Y_i)},\label{eq:mLPR}
\end{eqnarray}
Here, $(a)$ and $(c)$ follow from Bayes' rule, while $(b)$ is derived using the Markov property and the conditional independence outlined in Section \ref{sec:notation_model}.
%(Assumption \ref{asmpt:cond_indpt}). % and $(d)$ follows from \citet{jiang2014} that given the scores $\bdvS$, the associated LPR for the $i$th node is defined as
% \begin{equation*}\label{eq:LPR}
%   LPR_i = \bP(Y_i | S_i). 
% \end{equation*}
% The LPR value is equivalent to the local true discovery rate ($\ell$tdr) under a Bayesian framework with certain probabilistic assumptions \citep{jiang2014}. Similarly,
%The mLPR quantity is equivalent to m$\ell$tdr under the same Bayesian framework \citep{ploner2006multidimensional}. % Therefore, mLPR is a multidimensional extension of LPR even though their derivations differ. Moreover, \citet{jiang2014} proves that in the absence of a hierarchy, sorting events in descending order by their associated LPRs maximizes the pooled precision rate at any pooled recall rate. This result can also be extended to mLPRs in the HMC scenario. 
If we have estimates of $\bP(S_i| Y_i)$ and $\bP(Y_i | Y_{pa(i)})$ in (b), or estimates of $\bP(Y_i | S_i)$, $\bP(Y_i | Y_{pa(i)})$ and $\bP(Y_i)$ in (c) for $i=1, \ldots, n$, we can obtain an (unnormalized) estimate of $\bP(Y_1, \ldots, Y_n| S_1, \ldots, S_n)$. Using this, we can estimate $mLPR_i := \bP(Y_i=1| S_1, \ldots, S_n)$ by applying sum-product message-passing \citep{wainwright2008graphical} on $\mG$ to marginalize over $Y_1, \ldots, Y_{i-1}, Y_{i+1}, \ldots, Y_n$.

%A training set is needed to 
{\color{black} We estimate $\bP(Y_i | Y_{pa(i)})$, $\bP(Y_i)$, $\bP(S_i | Y_i)$, and $\bP(Y_i | S_i)$ using a training set consisting of $M_{tr}$ objects. Each object is represented by $K$ classifier scores ($\bf S$) and $K$ true class labels ($\bf Y$) corresponding to the $K$ classes. The $M$ objects are assumed to follow an i.i.d. distribution, with $\bf S$ and $\bf Y$ satisfying the probability properties (i), (ii), and (iii) described in Section \ref{sec:notation_model}.
Following previous notations, let $i = (m - 1) \cdot K + k$, where $m$ represents the object index and $k$ the class index. This results in a total of $n_{tr} = K \cdot M_{tr}$ classification cases, with each associated with a true class label and a classifier score. } %Note that these $n_{tr}$ observations are not independent due to the constraints imposed by the class hierarchy, even though the $M_{tr}$ objects are assumed to be independent of one another.}

We estimate $\bP(Y_k | Y_{pa(k)})$ as the proportion of positive objects in class $k$ among those whose parent classes are positive.  $\bP(Y_k)$ is calculated as the proportion of objects labeled as positive for class $k$ out of the total number of objects. For $\bP(S_i | Y_i)$, we model $\bP(S_i | Y_i = 0)$ and $\bP(S_i | Y_i = 1)$ as two Gaussian distributions, following the approach of \citet{decoro2007bayesian}. To estimate $\bP(Y_i | S_i)$, we adopt the method proposed by \citet{jiang2014}, which trains a model based on kernel density estimators for $\bP(S_i | Y_i = 0)$ and $\bP(S_i)$. For a new object with known classifier scores but unknown labels, we input the scores into the trained models to compute $\bP(S_i | Y_i)$ and $\bP(Y_i | S_i)$. The estimated mLPRs for the new object are then obtained by applying sum-product message passing to Formula (b) or Formula (c) in \eqref{eq:mLPR}.

{\color{black}The key difference between Formula (c) and Formula (b) lies in the quantity that needs to be estimated: Formula (c) requires estimating $\bP(Y_i | S_i)$, whereas Formula (b) requires estimating $\bP(S_i | Y_i)$. As discussed in detail by Jiang et. al (2014), the estimation of $\bP(Y_i | S_i)$ is more reliable than that of $\bP(S_i | Y_i)$ \citep{jiang2014}. The latter is highly sensitive to both the arrangement of the data and the complexity of its distribution. In particular, if the training samples are concentrated within one or two short intervals and sparse elsewhere, the estimation for $\bP(S_i | Y_i)$ becomes much less reliable. Therefore, throughout the rest of this article, we use Formula (c) to estimate mLPRs rather than Formula (b).
%as the estimation of $\bP(Y_i | S_i)$ is more reliable than that of $\bP(S_i | Y_i)$ \citep{jiang2014}. 
For comparisons of the two approaches, see Supplement D.4. The following theorem provides the convergence rate of the estimation based on Formula (c).}

\begin{theorem}\label{thm:mLPR_rate}
  %{\color{black} \sout{Under Assumptions A1 (the densities of $F_{0,k}$ and $F_{1,k}$ are bounded, $k=1, \ldots, K$), A2 (mild regularity conditions on the kernel used in the kernel density estimation), and A3 (the kernel bandwidth is chosen to be $[(\log M)/M]^{1/3}$), whose details are delegated to Supplement E.1}}
  {\color{black} Under Assumptions A1, A2, A3 provided in Supplement E.1, where A1 indicates densities of $F_{0,k}$ and $F_{1,k}$ are bounded, $k=1, \ldots, K$; A2 provides some mild regularity conditions on the kernel used in the kernel density estimation, and A3 suggests the kernel bandwidth is chosen to be $[(\log M_{tr})/M_{tr}]^{1/3}$,}
  by employing kernel density estimators to estimate $\bP(Y_i | S_i)$ values and employing the empirical estimators to estimate $\bP(Y_i)$ values and $\bP(Y_i|Y_{pa(i)})$ values, it follows that with probability at least $1 - C_1 \cdot K\cdot 2^{d}/{M_{tr}}$
    $$\vert \widehat{mLPR}_i - mLPR_i \vert \leq C_2 \cdot 2^{d} \cdot K\cdot \left (\frac{\log M_{tr}}{M_{tr}}\right )^{\frac{1}{3}}, ~~ i = 1, \ldots, n,$$
    where $d$ is the maximum degree of nodes in $\mG$, and $C_1$ and $C_2$ are positive constants that depend on the densities corresponding to the $F_{1,k}$ and $F_{0,k}$ values.    
  \end{theorem}

Theorem \ref{thm:mLPR_rate} shows that $\widehat{mLPR}$ converges to the true $mLPR$ value at the rate $\calO(M_{tr}^{-1/3}K\cdot 2^{d})$ (ignoring the log factors), where $d$ is the maximum degree of nodes in $\mG$. 
This theorem suggests that accurate estimates of mLPRs can be achieved when there is a sufficient number of training samples (large $M_{tr}$), or when the graph $\mG$ is either sparse (small $d$) or small in size (small $K$). The convergence rate for mLPR is derived from the convergence rate of $\bP(Y_i | S_i)$, which relies on the uniform convergence of the kernel density estimator \citep{jiang2017uniform}. It also depends on the convergence rates of $\bP(Y_i)$ and $\bP(Y_i | Y_{pa(i)})$, which are governed by the Hoeffding bound. % By combining these rates, we are employing the graph elimination method to obtain the marginal conditional probability $\bP(Y_i =1 | S_1, \cdots, S_{n})$, i.e., the mLPR value of the $i$th object \citep{wainwright2008graphical}.
The proof is given in Supplement E.1.

The procedure described above accounts for the complete dependence between classes, which we refer to as the \textbf{full} version for computing $\widehat{mLPR}$. When the dependency structure is sparse, reasonable approximations can be achieved with reduced computational cost by partially accounting for the dependency structure. Alongside the full version, we introduce the following two approximations.
\begin{itemize}
\item  \textbf{Independence}. Here, we assume that all classes are independent of one another. In this case, {\color{black} $mLPR_i = \bP(Y_i = 1 | \bdvS) = \bP(Y_i = 1 | S_i)$}. This version is referred to as the independence approximation (abbreviated as \textbf{indpt}). It reduces to the estimation of the local precision rate proposed by \citet{jiang2014} for multi-label classification and is utilized in \citet{ho2018hierlpr} for hierarchical multi-label classification.
\item  \textbf{Neighborhood}. In this version, we incorporate local, but not full, dependencies when computing $\widehat{mLPR}$. We assume that Event $i$ is independent of Event $i'$ for $i' \not\in nbh(i) \cup {i}$. Under this assumption, $mLPR_i = \bP(Y_i = 1 | \bdvS_{nbh(i) \cup {i}})$, and we only need to consider $\bP(\bdvY_{nbh(i) \cup {i}} | \bdvS_{nbh(i) \cup {i}})$ when applying Equation \eqref{eq:mLPR}, where $\bdvY_{nbh(i) \cup \{i\}} := \{Y_{i'}| i' \in nbh(i) \text{ or } i'=i\}$ and $\bdvS_{nbh(i) \cup \{i\}} := \{S_{i'}| i' \in nbh(i) \text{ or } i'=i\}$. This version is referred to as the neighborhood approximation (abbreviated as \textbf{nbh}).
\end{itemize}

These three versions offer different advantages depending on the scenario. For instance, the independence or neighborhood approximation is efficient when only weak or local dependencies are observed between classes. Further discussion can be found in Section \ref{sec:synthetic} (where the full version performs best) and Section \ref{sec:disease_example} (where the independence or neighborhood approximation outperforms the full version).

\subsection{ {\bf HierRank}: the Ranking Algorithm based on $\widehat{mLPR}$s}\label{sec:algo}
{\color{black} Given $\widehat{mLPR}$ values, we consider
\begin{equation}
  \widehat{\Objective\/}(\bdpi; \widehat{mLPR}_1, \ldots, \widehat{mLPR}_n) := \sum_{r = 1}^n (n - r + 1) \widehat{mLPR}_{\pi^{-1}(r)},\label{eq:est_cobj}
\end{equation}
which serves as an empirical counterpart to \Objective~\eqref{eq:cobj}. When there is no ambiguity, we simplify the notation and refer to $\widehat{\Objective}(\bdpi; \widehat{mLPR}_1, \ldots, \widehat{mLPR}_n)$ as $\widehat{\Objective}$ in \eqref{eq:est_cobj}.

Since the estimated values $\widehat{mLPR}_i$ may significantly deviate from the true $mLPR_i$, naively sorting them in descending order may not preserve the desirable properties discussed in Section~\ref{sec:mLPR_property}, such as consistency with the given hierarchy. %To address this issue, we propose a sorting algorithm, \textbf{HierRank}, which constructs a ranking $\boldsymbol{\pi}$ that maximizes $\widehat{\Objective}(\bdpi; \widehat{mLPR}_1, \ldots, \widehat{mLPR}_n)$ over all rankings that satisfy the hierarchical constraint, when it is applied to  $\widehat{mLPR}_i$'s. 
{\color{black} 
To address this issue, we introduce \textbf{HierRank}, a sorting algorithm that can operate on any values $V_1, \ldots, V_n$ associated with the $n$ events to be ranked. Specifically, it produces a ranking $\boldsymbol{\pi}$ that is consistent with the class hierarchy and, among all orderings that respect the hierarchy, selects the one that maximizes $\sum_{r=1}^n (n-r+1), V_{\pi^{-1}(r)}$. When $V_i = \widehat{mLPR}_i$, \textbf{HierRank} then maximizes $\widehat{\Objective}(\boldsymbol{\pi};\widehat{mLPR}_1,\ldots,\widehat{mLPR}_n)$ while respecting the class hierarchy.}

Formally, {\bf HierRank}, when applied to $\widehat{mLPR}_1,\ldots,\widehat{mLPR}_n$, solves the following optimization problem, which is the empirical counterpart of \eqref{opt:max_cobj}:
\begin{eqnarray}
    \max_{\bdpi} && \widehat{\Objective\/}(\bdpi;  \widehat{mLPR}_1, \ldots, \widehat{mLPR}_n) \label{opt:max_est_cobj}\\
  \text{s.t.} && \text{$\pmb{\pi}$ is hierarchically consistent.}\nonumber
\end{eqnarray}

{\color{black} 
%In other words, \textbf{HierRank} ranks $n$ events distributed across $M$ graphs that share the same structure, with each graph corresponding to one of the $M$ objects. Each graph is either a tree or a collection of disjoint trees, which comprises $K$ classes. The objective of \textbf{HierRank} is to order the events in these graphs into a single connected chain, where each parent node has exactly one child. This chain must include all $n$ events, align with the original class hierarchy (i.e., one node must be a descendant/child of another node in the resulting chain if the former node is a descendant/child of the latter node in the original class hierarchy), and simultaneously aim to maximize the value of 
In other words, \textbf{HierRank} ranks $n$ events distributed across $M$ graphs of identical structure, with each graph (either a tree or a forest comprising $K$ classes) corresponding to one of the $M$ objects, {\color{black}with the following properties:  
(i) It arranges all the $n=M \times K$ events into a single connected chain in which every parent has exactly one child;  
(ii) The derived chain must preserve the original class hierarchy (i.e., if a node is a descendant in the hierarchy, it must also be a descendant in the chain);  
(iii) When applied to $\widehat{mLPR}_1,\ldots,\widehat{mLPR}_n$, it maximizes $\widehat{\Objective}$ among all orderings that respect the class hierarchy.} In more detail, the basic steps of \textbf{HierRank} based on $\widehat{mLPR}_1,\ldots,\widehat{mLPR}_n$} are as follows:
\begin{itemize}
    \item Identify all single-child chains within the $M$ graphs. A single-child chain is a connected subgraph in which each node has at most one child in the original graph.
    \item Merge two single-child chains that share the same root in the original graph into one chain, preserving the class hierarchy. This step is the core of the algorithm and must be guided by the $\widehat{mLPR}$ values to ensure the maximization of $\widehat{\Objective}$ among all orderings that respect the class hierarchy. (In the next two sections, we begin with a toy example to illustrate the {\bf Chain-Merge algorithm} (Algorithm~\ref{algo:chain_merge}), followed by a formal outline of the algorithm. All relevant notation and algorithmic details are provided in Supplement~B.1.
    \item Repeat the above steps until the original tree is fully merged into a single single-child chain.
\end{itemize}

\subsubsection{Demonstration of the Chain-Merge Algorithm: A Toy Example}\label{sec:ToyE}
In Figure~\ref{fig:chain_merge_illustration}, we demonstrate the {\bf Chain-Merge Algorithm} using the subgraph rooted at Node~F from Figure~\ref{fig:model_graph}(b). In this subgraph, we identify two single-child chains: $\{\text{I} \rightarrow \text{J}\}$ and $\{\text{G} \rightarrow \text{H}\}$, both sharing the same root node F in the original graph. The associated $\widehat{mLPR}$ values are: $\widehat{mLPR}_I = 0.3$, $\widehat{mLPR}_J = 0.9$, $\widehat{mLPR}_G = 0.8$, and $\widehat{mLPR}_H = 0.1$.

To merge these two chains, we aim to arrange the four nodes into a single chain while respecting the hierarchy—specifically, node I must precede J, and node G must precede H. Among all valid orderings that satisfy these constraints, we seek the one that maximizes $\widehat{\Objective}$.

According to Formula~(4.2), maximizing $\widehat{\Objective}$ requires ranking nodes with higher $\widehat{mLPR}$ values as early as possible. Ignoring hierarchy constraints, the optimal ranking would be $\{\text{J}(0.9) \rightarrow \text{G}(0.8) \rightarrow \text{I}(0.3) \rightarrow \text{H}(0.1)\}$, yielding $\widehat{\Objective} = 4 \times 0.9 + 3 \times 0.8 + 2 \times 0.3 + 1 \times 0.1 = 6.7$. However, this ordering violates the hierarchy by placing J before I.

\begin{figure}[tp]
  \centering
   \includegraphics[width=\linewidth]{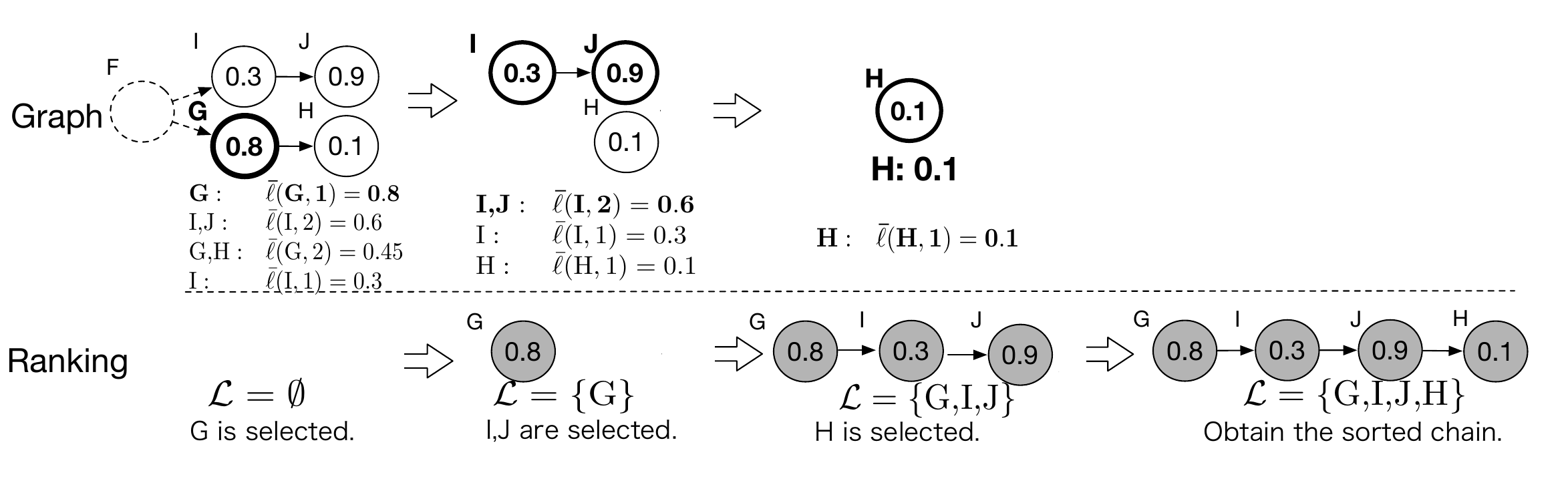}
   \caption{Example of the merging process in Algorithm 1: Bold circles indicate the sub-chain with the highest {\color{black} average $\widehat{mLPR}$ values}, while gray-filled circles represent the ranking produced by the merging procedure.} %Example of the merging process performed in Algorithm \ref{algo:chain_merge}%, showing how to merge the two sub-chains $\text{G}\rightarrow \text{H}$ and $\text{I} \rightarrow \text{J}$ that start from Node $\text{F}$ in Figure \ref{fig:model_graph} (b). 
   %The nodes indicated by bold circles correspond to the sub-chain of the highest mean values, and the nodes indicated by gray-filled circles form the ranking produced by the merging procedure.}
     \label{fig:chain_merge_illustration}
\end{figure}

{\color{black}Using $\widehat{mLPR}_1,\ldots,\widehat{mLPR}_n$}, the {\bf Chain-Merge Algorithm} aims to find the ordering that maximizes $\widehat{\Objective}$ among all those that respect the given hierarchy. We now describe and justify each step of the algorithm as applied to this toy example:

\begin{itemize}
\item {\bf Candidate nodes or sub-chains to rank first.} We define a sub-chain as any contiguous prefix of a chain, starting from its root and maintaining the original ordering. For the four nodes under consideration, the algorithm identifies the following candidate nodes/sub-chains that may be ranked first: $\{\text{I}\}, \{\text{G}\}, \{\text{I} \rightarrow \text{J}\}$, and $\{\text{G} \rightarrow \text{H}\}$. The nodes $\{\text{J}\}$ and $\{\text{H}\}$ are not eligible to be ranked first due to hierarchy constraints.

\item {\bf Determine which node or subchain should appear first in the ordering.} Comparing $\{\text{I}\}, \{\text{G}\}$, and $\{\text{G} \rightarrow \text{H}\}$, it is clear that $\{\text{G}\}$ should be preferred as it has a higher $\widehat{mLPR}$ value than both node $\text{I}$ and node $\text{H}$. To choose between $\{\text{G}\}$ and $\{\text{I} \rightarrow \text{J}\}$, it is equivalent to compare $\{\text{I}\rightarrow \text{J} \rightarrow \text{G} \}$ and $\{\text{G}\rightarrow \text{I} \rightarrow \text{J} \}$. Compared to $\{\text{G}\rightarrow \text{I} \rightarrow \text{J} \}$, the $\widehat{\Objective\/}$ {\color{black} value} for $\{\text{I}\rightarrow \text{J} \rightarrow \text{G} \}$ is higher by $(\widehat{mLPR}_I + \widehat{mLPR}_J)$ (due to both nodes $\text{I}$ and $\text{J}$ being ranked one position higher), but lower by $2 \times \widehat{mLPR}_G$ (due to node $\text{G}$ being ranked two positions lower). Therefore, we compare the average of $\widehat{mLPR}_I$ and  $\widehat{mLPR}_J$, which is $(0.3+0.9)/2 = 0.6$, with $\widehat{mLPR}_G = 0.8$. Since $0.8 > 0.6$, $\{\text{G} \rightarrow \text{I} \rightarrow \text{J}\}$ is preferred, and $\{\text{G}\}$ should be ranked first.

{\bf Remark}: This step highlights the need to compare average $\widehat{mLPR}$ values of sub-chains with individual nodes. This is necessary when the $\widehat{mLPR}$ values of some parent nodes are lower than those of their descendants in the sub-chain, potentially resulting in a higher $\widehat{\Objective}$ if the nodes within the sub-chain are kept together. We calculate the average because moving a sub-chain of $l$ nodes up by one position results in pushing some other node down by $l$ positions. Therefore, to determine the optimal ranking, we compare the sub-chain’s average $\widehat{mLPR}$ against the competing node’s value.

\item {\bf Determine the next node or sub-chain to rank.} After ranking $\{\text{G}\}$ first, the remaining nodes are $\{\text{I}\}, \{\text{J}\}$, and $\{\text{H}\}$. The next candidates are $\{\text{I}\}, \{\text{I} \rightarrow \text{J}\}$, and $\{\text{H}\}$. Since $\widehat{mLPR}_H < \widehat{mLPR}_I < (\widehat{mLPR}_I + \widehat{mLPR}_J)/2$, the optimal choice is to place $\{\text{I} \rightarrow \text{J}\}$ after $\{\text{G}\}$.

{\bf Remark:} The choice of $\{\text{I} \rightarrow \text{J}\}$ ensures that J follows I immediately while the choice of $\{\text{I}\}$ allows an insertion of another node (like H) between I and J. Since $\widehat{mLPR}_H < \widehat{mLPR}_I < (\widehat{mLPR}_I +  \widehat{mLPR}_J)/2$, we conclude that $\{\text{I} \rightarrow \text{J}\}$ should immediately follow $\{\text{G}\}$ to ensure an optimal $\widehat{\Objective\/}$ value. 

\item {\bf Final ranking.} With $\{\text{G} \rightarrow \text{I} \rightarrow \text{J}\}$ formed, we append the final node $\{\text{H}\}$ to obtain the final merged chain: $\{\text{G} \rightarrow \text{I} \rightarrow \text{J} \rightarrow \text{H}\}$.

\end{itemize}

The ranking produced by the above approach satisfies hierarchical consistency, as it preserves the relative ordering of nodes within each chain during the merging process. Moreover, this ranking maximizes $\widehat{\Objective\/}$ among all possible orderings of the four nodes that respect the hierarchy, as it effectively sorts the $\widehat{mLPR}$ values of individual nodes or the average $\widehat{mLPR}$ values of sub-chains in descending order. As discussed earlier, we consider sub-chains because, in some cases, the nodes within a sub-chain need to be kept together. For an intuitive interpretation, a sub-chain can be treated as a pseudo-node, with its associated $\widehat{mLPR}$ {\color{black} value} defined as the average of the $\widehat{mLPR}$ values of all nodes it contains. It allows us to account for tradeoffs in position shifts during ranking (moving a sub-chain of length $l$ up by one position pushes another single node down by $l$ positions). As a result, when comparing a sub-chain to a single node to determine which should be ranked first, we effectively compare the sub-chain’s average $\widehat{mLPR}$ to the single node’s $\widehat{mLPR}$ value.}

}

\subsubsection{The Chain-Merge Algorithm: A Formal Outline}\label{sec:AlgFormalSketch}

{\color{black} %Note that the ranking procedure from the previous toy example can be applied to not only $\widehat{mLPR}$ values, but also other node values like classification scores. Extending its intuition,
}
We formally present the \textbf{Chain-Merge Algorithm} in Algorithm~\ref{algo:chain_merge}, {\color{black}which is based on values $V_1,\ldots,V_n$ associated with the events.} 

\begin{algorithm}[tp]
\raggedright
\caption{Chain-Merge algorithm.}\label{algo:chain_merge} 
\textbf{Notation: $\mC_r(h)$ is a chain of length $h$ whose root is $r$ and whose node has at most one child; the length $h$ can be abbreviated if it refers to the entire chain. {\color{black} $\bar\ell_{\bdvV}(r, h)$ is the average of $\{V_i: i\in \mC_r(h)\}$, where $V_i$ is the value associated with node $i$ (e.g., $V_i=\widehat{mLPR}_i$).}\\
}
\textbf{Input: }$p$ chains $\mD = \{\text{node} \in \mC_r: r = r_1, \ldots, r_p\}$, node {\color{black} values $\bdvV$}.\\
\textbf{Procedure: }
\begin{algorithmic}[1]
\STATE Set $\mL=\emptyset$.
\STATE Compute {\color{black} $\{\bar\ell_{\bdvV}(r, h): h = 1, \dots, |\mC_r|, ~~ r = r_1, \ldots, r_p\}$}.
\WHILE{$\mD \neq \emptyset$}
\STATE {\color{black} $(r',h') = \arg \underset{C_r(h) \subset \mD}{\max} \bar\ell_{\bdvV}(r,h).$}
\STATE $\mL \gets \mL \oplus C_{r'}(h')$, where '$\oplus$' indicates the concatenation of two sequences.\\
\STATE $\mD \gets (\mD \backslash C_{r'}) \cup (C_{r'}\backslash C_{r'}(h'))${\color{black}, where '$\backslash$' is the set difference operation.}\\
\STATE Update the mean {\color{black} values} of the remaining nodes as in Step 2. 
\ENDWHILE      
\end{algorithmic}
\textbf{Output: }$\mL$.
\end{algorithm}

{\color{black} 
A step-by-step illustration of Algorithm~\ref{algo:chain_merge}, {\color{black} using the toy example in Figure~\ref{fig:chain_merge_illustration} again (where $V_i = \widehat{mLPR}_i$), is shown below:} 
\begin{itemize} \item \textbf{Initialization:} Start with an empty ranked list, $\mL = \emptyset$.

\item \textbf{Step 1:}
With the common root $\text{F}$, consider four sub-chains: $\{\text{G}\}$, $\{\text{G} \rightarrow \text{H}\}$, $\{\text{I}\}$, and $\{\text{I} \rightarrow \text{J}\}$, with mean {\color{black} values} of $0.8$, $0.45$, $0.3$, and $0.6$, respectively. The sub-chain $\{\text{G}\}$ is selected as it has the highest mean {\color{black} value}.
Remove the sub-chain $\{\text{G}\}$ from the original graph and add it to $\mL$. Update $\mL$ as $\mL = \{\text{G}\}$.

\item \textbf{Step 2:}
In the remaining graph, three sub-chains remain: $\{\text{H}\}$, $\{\text{I}\}$, and $\{\text{I} \rightarrow \text{J}\}$, with updated mean {\color{black} values} of $0.1$, $0.3$, and $0.6$, respectively. Among these, the sub-chain $\{\text{I} \rightarrow \text{J}\}$ is selected due to its highest mean {\color{black} value}.
Remove the sub-chain $\{\text{I} \rightarrow \text{J}\}$ from the remaining graph and add it to $\mL$. Update $\mL$ as $\mL = \{\text{G}, \text{I}, \text{J}\}$.

\item \textbf{Step 3:}
The single node $\{\text{H}\}$ is selected as it is the last remaining node.
Remove node $\{\text{H}\}$ from the graph and add it to $\mL$. Since no nodes remain in the graph, the final ranked list is $\mL = \{\text{G}, \text{I}, \text{J}, \text{H}\}$. \end{itemize}}

The ranking produced by the Chain-Merge algorithm satisfies hierarchical consistency because it preserves the relative ordering of nodes within each chain during the merging process. Moreover,  {\color{black} when applied to $\widehat{mLPR}$ values,} this ranking maximizes $\widehat{\Objective}$ for the subgraph of interest among all possible topological orderings, as the algorithm effectively sorts subchains based on their {\color{black} average $\widehat{mLPR}$ values} in descending order. Intuitive reasoning supporting this approach was provided in the previous section.

\begin{algorithm}[tp]
\raggedright  
\caption{HierRank algorithm (sketch) for ranking the nodes in the tree hierarchy.}\label{algo:HierRank_tree_sketch}
\textbf{Input: }{A tree graph $\mG$, node {\color{black} values $\bdvV$} (e.g., $\widehat{mLPR}$ values).}\\
\textbf{Procedure: }
\begin{algorithmic}[1]
\WHILE{There exists a node that has more than one child node.}
\STATE  Identify a node $v$ such that i) it has at least two child nodes, ii) each of its child/descendant nodes has at most one child node in $\mG$.
\STATE  Merge all the chains that starts from $v$ into one chain by the Chain-Merge algorithm.
\ENDWHILE
\IF{there remain multiple chains}
\STATE Apply the Chain-Merge algorithm to merge these chains into one chain.
\ENDIF
\STATE Let $\mL$ be the resulting chain.
\end{algorithmic}
\textbf{Output: }{a ranking $\mL$}.
\end{algorithm}
{\color{black}
\subsubsection{The HierRank Algorithm} \label{HierRank}}
Building on the Chain-Merge algorithm, HierRank resolves \eqref{opt:max_est_cobj} by iteratively merging chains stemming from the same node into a single chain. %{\color{black} Note that the Chain-Merge algorithm and HierRank are not restricted to $\widehat{mLPR}$ values. They can be applied to general node values including raw classifications scores.}

A sketch of {\bf HierRank} is provided in Algorithm \ref{algo:HierRank_tree_sketch}, while the formal version of this procedure is detailed in Algorithm $2^{\prime}$ in Supplement B.1. Since {\bf HierRank} is based on the Chain-Merge algorithm, it inherits the desired optimality property; specifically, {\color{black}when applied to $\widehat{mLPR}$ values,} it produces a topological ordering of $\mG$ that achieves the maximum value of $\widehat{\Objective}$ among all possible topological orderings. This claim is formally presented in Theorem \ref{thm:opt_HierRank_tree}, with the proof presented in Supplement E.2.

\begin{theorem}\label{thm:opt_HierRank_tree}
{\color{black}When applied to $\widehat{mLPR}$ values,} {\bf HierRank}, {\color{black} which merges all of the events distributed across one or multiple disjoint trees into a single chain, achieves }an optimal topological ordering w.r.t \eqref{opt:max_est_cobj}.
\end{theorem}

We have two remarks regarding {\bf HierRank}.
First, the time complexity of {\bf HierRank} is $\calO(K^3)$ for each object, resulting in a computational cost of $\calO(MK^3 + MK \log M)$ for ranking $K$ nodes/classes across $M$ objects. A faster version of the algorithm can be derived by eliminating exhaustive merging and repeated computations at each iteration, reducing the computational cost to $\calO(n \log n)$, where $n = MK$. Details of this faster version, Algorithm $2^{\prime\prime}$, are provided in Supplement B.3. Second, the current version of {\bf HierRank} is designed for tree graphs. It can, however, be extended to work with directed acyclic graphs (DAGs). The details of this extension are given in Supplement B.4.

%{\color{black} First, to provide an intuition of why {\bf HierRank} maximizes $\widehat{\Objective}$ while respecting the tree hierarchy, we consider the tree as consisting of pseudo-nodes introduced in Section \ref{sec:ToyE}, which are used to illustrate the ``average'' concept in the Chain-Merge Algorithm. In this view, the parent pseudo-nodes's scores (the average scores of nodes within the corresponding pseudo-nodes) are higher than or equal to the child pseudo-nodes's scores. Then, {\bf HierRank} boils down to taking pseudo-nodes from the root to leaves along the tree in descending order of the pseudo-nodes's scores. Therefore, the desired optimality property naturally follows.} 

\subsection{A Unified procedure of mLPR-based Decision-Making in HMC}\label{sec:cutoff_selection}
In this section, we present a unified procedure that integrates mLPR estimation with a ranking algorithm and provides a method for determining a cutoff on the ranked list to make a final decision. The inputs include a training dataset containing class-wise classifier scores and true labels, along with a graph organizing the classes. The outputs are the trained models for mLPR estimation and the suggested cutoff for the final classification decision. The procedure is as follows:
\begin{enumerate} \item[0.] Split the original training dataset into two subsets: a training subset and a validation subset.%, and a testing subset. 
\item Train the models on the training subset to obtain the parameters required for $\widehat{mLPR}$ computation, as described in Section \ref{sec:mLPR_estimation}.
\item Calculate $\widehat{mLPR}s$ for the events in the validation subset. % and the testing dataset. 
\item Use {\bf HierRank} to rank the events in the validation subset based on $\widehat{mLPR}$ values, resulting in a single ranking. %Repeat the same procedure on the testing dataset to produce another ranking. 
\item Using the ranking from the previous step, compute the false discovery proportion (FDP) and the $F_1$ score when the top {\color{black} $\kappa \times 100\%$ of events are identified as positives, where $0 < \kappa < 1$. FDP is defined as the ratio of false predicted positives to the total number of predicted positives, while the $F_1$ score is the harmonic mean of the recall and precision rates. Select the optimal $\hat{\kappa}$ that either maximizes the $F_1$ score or targets a specific FDR value. Alternative selection criteria can be applied in a similar manner to determine the optimal $\hat{\kappa}$. {\color{black} 
When the number of objects or classes is small, or equivalently, when the number of events to be considered is small, the decision threshold may also be set using a $\widehat{mLPR}$ (or average $\widehat{mLPR}$) cutoff calibrated by the obtained optimal $\hat{\kappa}$. %For instance, when the number of test objects is limited, applying $\hat{\kappa}$ directly to the ranking on the test subset may be unreliable due to distributional shifts or high variance. A more appropriate threshold should instead be determined based on the $\widehat{mLPR}$ value corresponding to $\hat{\kappa}$ on the validation subset.
}}
\end{enumerate} 
%\item Apply the selected cutoff value to the ranking obtained with the testing dataset.
%\end{enumerate}

In the procedure outlined above, the data-splitting process in Step 0 requires careful consideration to ensure that the distributions of the training and validation sets are similar and representative. When the input dataset is sufficiently large, random splitting is adequate. %However, if the dataset size is limited, a cross-validation procedure can be used.
%Furthermore, we provide a theoretical justification for the cutoff selection from Step 4 on an mLPR-based ranking by showing that the naive ranking of true mLPRs achieves Bayes optimality\footnote{A ranking is Bayes-optimal with respect to the top-$l$ error if it achieves the minimum expected top-$l$ error; see Section 2 in \citet{yang2020consistency} for a more detailed definition.} with respect to the top-$l$ error under certain conditions; see Supplement C for a detailed discussion.

{\color{black} 
For the testing objects with classifier scores but unknown labels, the trained models from Step 1 are used to compute their $\widehat{mLPR}$ values across the classes. These values are then used by {\bf HierRank} to rank the classification events for these objects. Finally, the top $\hat{\kappa} \times 100\%$ of ranked events, where $\hat{\kappa}$ is determined in Step 4, are labeled as positive, and the rest as negative. Alternatively, the top events exceeding a $\widehat{mLPR}$ (or average $\widehat{mLPR}$) cutoff, calibrated based on $\hat{\kappa}$, may be labeled as positive.}

We evaluated the performance of {\bf HierRank} and the unified procedure described above using synthetic and real datasets, as discussed in Section \ref{sec:evaluation}.

\section{Experiments}\label{sec:evaluation}
We compared our method with off-the-shelf HMC approaches on one synthetic dataset and two real datasets, evaluated based on two criteria as described below.

\subsection{Setup}\label{sec:eval_setup}
\noindent \textit{Benchmarked Methods}. We compared our method against several methods listed below. Details of these methods are provided in Table S1 in Supplement D.3.

\begin{itemize}
\item %\textbf{Raw [score]--based methods}. The raw classifier scores are ranked by NaiveSort and HierRank, which we call \textbf{Raw-NaiveSort} and \textbf{Raw-HierRank}, respectively.
{\bf Raw score-based methods.} Specifically, we consider two approaches: \textbf{Raw-NaiveSort} and \textbf{Raw-HierRank}, which apply NaiveSort (introduced in Section \ref{sec:mLPR_sorting}) and HierRank, respectively, to rank the events based on raw classifier scores. {\color{black} Note that ranking the raw classifier scores directly is reasonable when higher classifier scores indicate a higher likelihood of positive class labels, which is typically expected for a well-performing classifier.}

\item \textbf{Clus-HMC variants}. {\color{black} Clus-HMC is a state-of-the-art, non-neural-network-based, end-to-end HMC method that performs classification while simultaneously addressing hierarchical dependencies \citep{blockeel2002, blockeel2006decision, vens2008, schietgat2010predicting}. We implemented Clus-HMC in the \textit{R} language independently, providing two versions: one with bagging, referred to as \textbf{ClusHMC-bagging}, and another without ensembling, referred to as \textbf{ClusHMC-vanilla}. We validated the correctness of our implementation by comparing its outputs against those produced by the \textit{CLUS} software\footnote{https://dtai.cs.kuleuven.be/software/clus/}}.  
\item \textbf{HIROM variants}. HIROM is a state-of-the-art two-stage HMC classifier \citep{bi2015bayes} and produces Bayes-optimal predictions that minimize a series of hierarchical risks, such as risks based on hierarchical ranking loss and hierarchical Hamming loss. We implemented two variants of HIROM in the \textit{R} language, referred to as \textbf{HIROM-hier.ranking} and \textbf{HIROM-hier.Hamming}, respectively.
\item \textbf{C-HMCNN}. 
Coherent Hierarchical Multi-Label Classification Networks (C-HMCNN) is a state-of-the-art neural network for HMC \citep{giunchiglia2020neurips, giunchiglia2021multi}, designed to leverage hierarchical information to produce predictions that adhere to constraints and improve performance. We used the publicly available code\footnote{https://github.com/EGiunchiglia/C-HMCNN} provided by the authors. %In this section, we report results for C-HMCNN only on the disease diagnosis dataset introduced later. C-HMCNN and our method perform comparably on the other two datasets.%, as our primary focus is on comparing our method with a neural-network-based approach for this challenging task. Notably, 
%\item Coherent Hierarchical Multi-Label Classification Networks (C-HMCNN) is a state-of-the-art neural network for HMC \citep{giunchiglia2020neurips, giunchiglia2021multi}, which exploits the hierarchy information in order to produce predictions coherent with the constraint and improve performance. We directly used the publicly available code\footnote{https://github.com/EGiunchiglia/C-HMCNN} provided by its authors. In this section, we only report results of C-HMCNN on the disease diagnosis dataset introduced subsequently since we are primarily interested in the comparison of our method to a neural-network-based method on this difficult task. In fact, C-HMCNN and our method performs comparably on the other two datasets.
\item \pmb{$\widehat{mLPR}$}\textbf{-based methods}. 
Our method, \pmb{$\widehat{mLPR}$}\textbf{-HierRank}, estimates mLPRs and ranks events based on $\widehat{mLPR}$ using HierRank. It was implemented in an R package\footnote{https://github.com/Elric2718/mLPR}. Alternatively, events can be ranked based on $\widehat{mLPR}$ using NaiveSort; this method is referred to as \pmb{$\widehat{mLPR}$}\textbf{-NaiveSort}. To distinguish among the three approaches for estimating mLPR described in Section~\ref{sec:mLPR_estimation}—namely, the independence approximation, neighborhood approximation, and the full version—we append the postfixes “-indpt,” “-nbh,” and “-full” to each method name (e.g., \Method\/HierRank-full). 

%\item Our method, \pmb{$\widehat{mLPR}$}\textbf{-HierRank}, which estimates mLPRs and ranks the events based on $\widehat{mLPR}$ using HierRank, was implemented in an R package\footnote{https://github.com/Elric2718/mLPR}.  We can also rank the events based on $\widehat{mLPR}$ by NaiveSort; we call this method \pmb{$\widehat{mLPR}$}\textbf{-NaiveSort}. We add postfixes ``-indpt'', ``-nbh'', ``-full'' to either method (e.g., \Method\/HierRank-full) to distinguish the independence approximation, the neighborhood approximation, and the full version for performing mLPR estimation, as described in Section \ref{sec:mLPR_estimation} 
\end{itemize}

\noindent \textit{Evaluation Criteria}. We used the following two criteria to evaluate the ranking results produced by the methods mentioned above.

\begin{itemize}
\item {\color{black} \textbf{Evaluation Criterion I: Precision--Recall (PR) at different cutoffs $\kappa$.} We assessed the rankings by calculating the recall (i.e., \# true predicted positives / \# all true positives) and precision (i.e., \# true predicted positives / \# predicted positives) values by taking the top $\kappa \times 100\%$ of events as positive and the others as negative, with $\kappa = 0.05, 0.1, 0.2, 0.3 \text{ and } 0.5$.}
%, where $\kappa$ is defined as {\# of top events taken as discoveries}/{total \# of events}.

%\item We used the PR curve to provide a comprehensive evaluation of each ranking (produced by any of the methods listed above). We further assessed the ranking by computing the recall (i.e., \# true discoveries / \# all true positives) and the precision (i.e., \# true discoveries /\# discoveries) values up to the top $\kappa \times 100\%$ events, where $\kappa$ := \{\# of top events that were taken as discoveries\}/\{total \# of events\}.
\item \textbf{Evaluation Criterion II: False discovery proportion (FDP) and $F_1$ score under proposed cutoff selection}. {\color{black}We used the procedure introduced in Section \ref{sec:cutoff_selection} to determine the cutoff $\hat{\kappa}$. The $F_1$ score was computed on the test dataset when $\hat{\kappa}$ was selected by maximizing the $F_1$ score on the validation dataset. Similarly, FDP was computed on the test dataset when $\hat{\kappa}$ was selected to target an FDR of $\alpha \times 100\%$, with $\alpha = $ 0.01, 0.05, 0.1, and 0.2.}
\end{itemize}

\noindent \textit{{\color{black}Evaluation Datasets}}. Each HMC method was evaluated using three datasets: 1) a synthetic dataset, 2) the disease diagnosis dataset from \citet{huang2010}, and 3) the RCV1v2 dataset \citep{lewis2004rcv1}. Details of these datasets are provided below.

\begin{figure}[bp]
  \centering
 \includegraphics[width=0.5\linewidth]{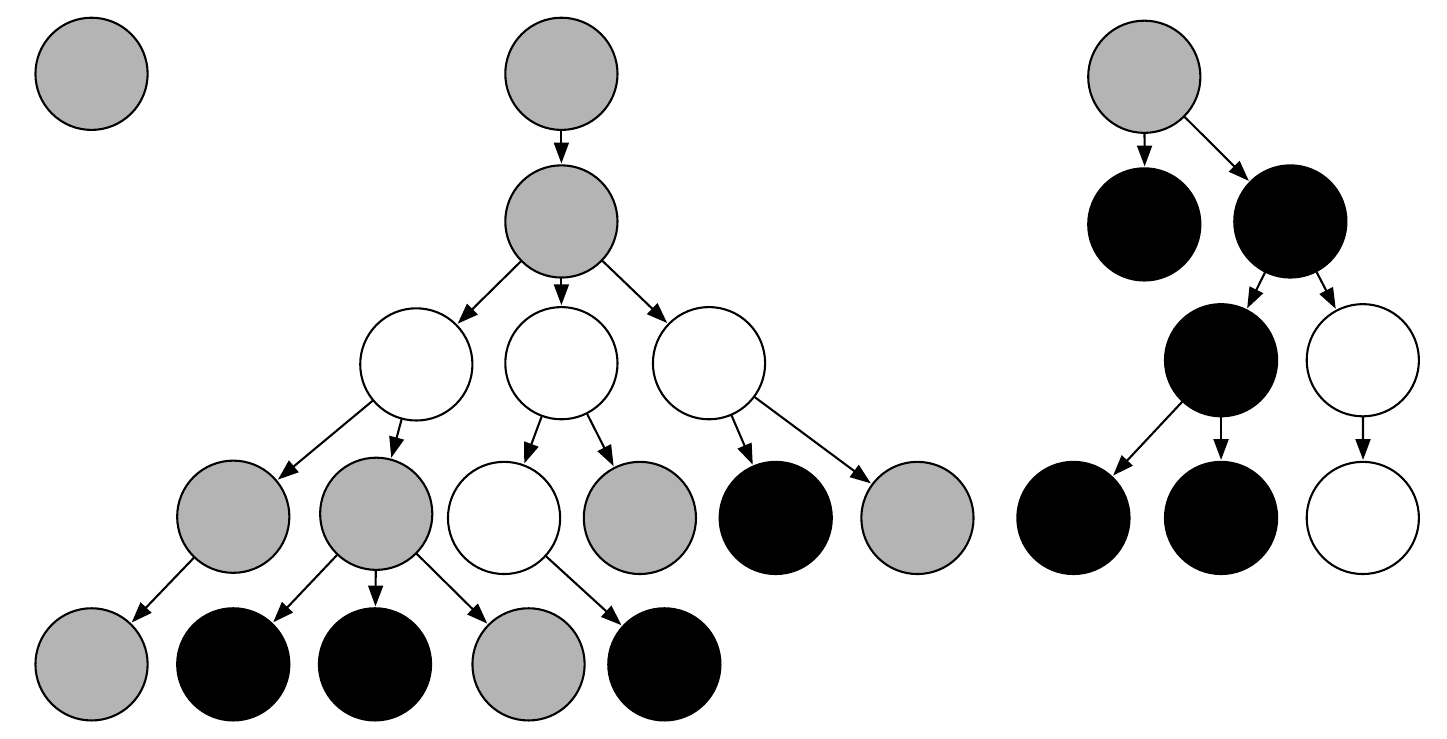}
  \caption{Class trees of the synthetic dataset: White, gray, and black indicate classes with high, medium, and low classifier quality, respectively.}\label{class_sim_settings}
\end{figure}

\begin{itemize}
\item \textbf{Synthetic dataset}. The simulated dataset comprised 25 classes, with a hierarchy shown in Figure \ref{class_sim_settings}.
There were 50,000 training objects and 10,000 testing objects. For each object, we generated the true class labels as follows: The positive case probability at root $\bP(Y_{\text{root}} = 1)$ and the conditional probabilities $\bP(Y_i = 1 | Y_{pa(i)} = 1)$ were randomly generated from a uniform distribution, subject to the constraint that there must be a minimum of 15 positive instances of any class in the training set (i.e., a minimum prevalence of $0.3\%$). Given the true class status, the score $S$ was generated from distributions specific to each class and status: data were generated from a Beta($\eta$, 3.5) distribution for the negative case and a Beta(3.5, $\eta$) distribution for the positive case. Here, we set $\eta$ to 2, 5.5, or 4 in descending order of the absolute mean difference between the two Beta distributions (i.e., $\vert 3.5-\eta\vert/\vert 3.5+\eta\vert$), which corresponds to the high, medium, or low classifier quality, respectively. The quality of classifier refers to the difficulty in distinguishing between the positives and the negatives. Specifics of the score generation mechanism are summarized in Table \ref{tbl:quality_nodes}.

\begin{table}[tp]
\caption{Score distributions by classifier quality for the synthetic dataset.}     
  \centering
  \begin{tabular}{l|cc|cc}
    \hline
    Quality & Positive instance & Negative instance & Absolute mean difference & Node color\\
    \hline
    High    & Beta$(3.5, 2)$      &  Beta$(2, 3.5)$ &  $3/11$    & white\\
    Medium  & Beta$(3.5, 5.5)$      &  Beta$(5.5, 3.5)$ & $2/9$     & gray\\
    Low     & Beta$(3.5, 4)$    &  Beta$(4, 3.5)$ & $1/15$   & black\\
    \hline
  \end{tabular}
\label{tbl:quality_nodes}  
\end{table}
\end{itemize}

\begin{figure}[bp]
\hspace*{-0.4in}     
\includegraphics[width=5.2in, height=3in]{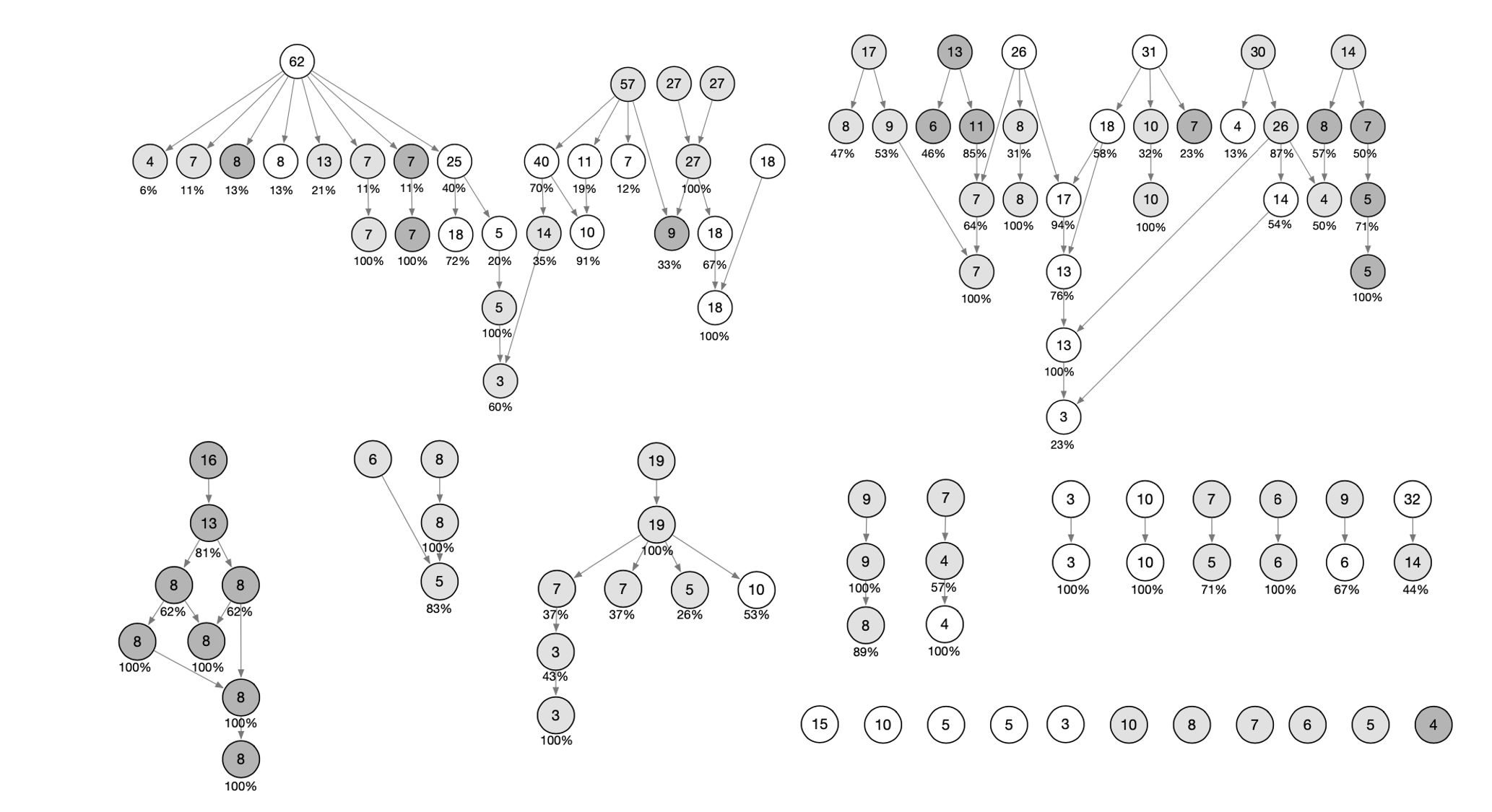}
\caption{Structure of the classes of the disease diagnosis dataset. The grayscale corresponds to the node quality: white indicates that a node's base classifier has an area under the curve (AUC) value of receiver operating characteristic curve (ROC) within the interval $(0.9, 1]$; light gray, $(0.7, 0.9]$; and dark gray, $(0, 0.7]$. The values inside the circles indicate the number of positive cases, and the values underneath indicate the maximum percentage of positive cases shared with a parent node.}\label{fig:disease_diag_graph_all}
\end{figure}

\begin{itemize}
\item \textbf{Disease diagnosis dataset}. \citet{huang2010} studied the problem of disease diagnosis using the UMLS DAG and  public microarray datasets from the National Center for Biotechnology Information (NCBI) Gene Expression Omnibus (GEO). They collected data from 100 studies, including a total of {\color{black} 196 datasets} and 110 disease classes. The 110 classes are represented by 110 nodes, which are grouped into 24 connected DAGs (Figure \ref{fig:disease_diag_graph_all}); see Supplement D.2 for further detail. In general, this graph has three properties: (i) It is shallow rather than deep; (ii) It is scattered rather than highly connected; (iii) Data redundancy occurs, e.g., for some nodes, all of the positive instances are also positive for the associated child nodes.

\item \textbf{RCV1v2}. {\color{black} This is the Reuters Corpus Volume I (RCV1) dataset from a text categorization application.} The raw data consists of $800,000$ manually categorized newswire stories. For this study, we used the corrected version, RCV1v2, which includes $30,000$ stories and $103$ categories. These categories are organized into four hierarchical trees representing different groups: \textit{corporate/industrial} with $34$ classes, \textit{economics} with $26$ classes, \textit{government/social} with $11$ classes, and \textit{markets} with $32$ classes. The class hierarchy and additional details can be found in \citet{lewis2004rcv1}.
\end{itemize}

{\color{black} %We selected these three datasets because each exemplifies a scenario that suits one of the three variants of our method: $\widehat{mLPR}$-HierRank-indpt, $\widehat{mLPR}$-HierRank-full, and $\widehat{mLPR}$-HierRank-nbh, respectively. In terms of classification difficulty, the synthetic dataset is the easiest, the disease diagnosis dataset is the most challenging\footnote{\color{black} In the sequel, we report C-HMCNN results exclusively on the disease diagnosis dataset, as it is the most challenging dataset. Results on other datasets are comparable to our $\widehat{mLPR}$-based approaches and are omitted for brevity.}, and the RCV1v2 dataset lies in between, being slightly more difficult than the synthetic dataset but not particularly challenging. In addition, we used the RCV1v2 dataset to highlight the importance of training the first-stage classifiers and the mLPR estimation models on distinct data subsets.
These three datasets each represent a distinct data quality scenario that aligns with one of the three variants of our method: $\widehat{mLPR}$-HierRank-indpt, $\widehat{mLPR}$-HierRank-full, and $\widehat{mLPR}$-HierRank-nbh, respectively. The synthetic dataset, with the highest-quality training samples, offers the most favorable conditions for estimating $\widehat{mLPR}$ accurately. In contrast, the disease diagnosis dataset, with limited and noisy training data, poses the greatest challenge.\footnote{\color{black} Accordingly, we report C-HMCNN results only on the disease diagnosis dataset, as it is the most challenging. Its performance on the other datasets is comparable to that of our $\widehat{mLPR}$-based methods and is omitted for brevity.} The RCV1v2 dataset lies between the two, moderately difficult due to its scale and class diversity. RCV1v2 is also used to illustrate the importance of training the first-stage classifiers and mLPR models on separate data subsets.
}

\subsection{Results on Synthetic Dataset}\label{sec:synthetic}
{\color{black}For end-to-end methods, the first-stage classifier scores were used as the input in the form of a one-dimensional feature. %For two-stage methods, the generated scores were fed into the second stage.
We present below the results of the benchmarked methods. For our HierRank-based method, we report only the results using the full-version approach for estimating $\widehat{mLPR}$. Additional results are provided in Supplement D.4.}\\

\noindent \textit{Results Based on Evaluation Criterion I}. For each two-stage method, we followed the procedure in Section \ref{sec:cutoff_selection} to obtain the ranking of the events in the testing dataset. We considered the top $\kappa \times 100\%$ of events as predicted positive events and calculated the corresponding precision and recall for $\kappa = $ 0.05, 0.1, 0.2, 0.3, and 0.5. The results, shown in Table \ref{tbl:synthetic_result}, led to the following observations. 
\begin{enumerate}

\item The performance of \Method\/NaiveSort was similar to that of \Method\/HierRank, and both outperformed the other methods. This may be because fully incorporating the hierarchical information (i.e., using the full version for estimating mLPRs) and having a large training sample size result in $\widehat{mLPR}$ values that closely approximate the true mLPRs. In this scenario, NaiveSort and HierRank produce similar rankings, as suggested by Proposition \ref{prop:hierarchy_consistency}, and these rankings are expected to be nearly optimal with respect to \Objective, as discussed in Section \ref{sec:mLPR_property}

%This may be due to the fact that fully considering the hierarchical information (using the full version for estimating mLPRS) and having a large training sample size produce $\widehat{mLPR}$ values that are very close to the true mLPRs. In this case, NaiveSort and HierRank lead to similar rankings according to Proposition \ref{prop:hierarchy_consistency}, and these rankings should be nearly optimal in terms of \Objective\/, according to Section \ref{sec:mLPR_property}.

\item Raw-NaiveSort and Raw-HierRank performed poorly compared to the other methods. {\color{black} This poor performance is attributed to the distributional differences between the raw scores for different classes. Despite their poor performance, Raw-HierRank significantly outperformed Raw-NaiveSort.} This difference arises because NaiveSort, which directly sorts raw scores, can violate the hierarchy, whereas HierRank ensures a ranking that respects the hierarchy. Overall, HierRank consistently outperforms NaiveSort.
%Raw-NaiveSort and Raw-HierRank performed poorly compared to the other methods; however, the latter significantly outperformed the former. This poor performance is due to raw scores being generated by class-specific classifiers without accounting for hierarchical information. They differed in their performance because directly sorting raw scores via NaiveSort might violate the hierarchy, whereas HierRank produces a ranking that respects the hierarchy. In general, HierRank will always perform better than NaiveSort.
\end{enumerate}

\begin{table}[bp]
  \caption{Recall and precision (prec) values on the synthetic testing dataset. Here, $\kappa := $\# predicted positives / \# all events. The highest values in each column are shown in bold. All values are percentages.}\label{tbl:synthetic_result}  
    \centering  
\begin{tabular}{l|p{0.45cm}p{0.45cm}|p{0.45cm}p{0.45cm}|p{0.45cm}p{0.45cm}|p{0.45cm}p{0.45cm}|p{0.45cm}p{0.45cm}}
\hline
\multicolumn{1}{r|}{$\kappa~$}                         & \multicolumn{2}{p{0.45cm}|}{0.05}                          & \multicolumn{2}{p{0.45cm}|}{0.1}                           & \multicolumn{2}{p{0.45cm}|}{0.2}                           & \multicolumn{2}{p{0.45cm}|}{0.3}                           & \multicolumn{2}{p{0.45cm}}{0.5}                           \\ \hline
 Method                                    & \multicolumn{1}{p{0.45cm}}{recall}        & prec          & \multicolumn{1}{p{0.45cm}}{recall}        & prec          & \multicolumn{1}{p{0.45cm}}{recall}        & prec          & \multicolumn{1}{p{0.45cm}}{recall}        & prec          & \multicolumn{1}{p{0.45cm}}{recall}        & prec          \\ \hline\hline
Raw-NaiveSort                               & \multicolumn{1}{p{0.45cm}}{5.3}           & 13.9          & \multicolumn{1}{p{0.45cm}}{8.5}           & 11.2          & \multicolumn{1}{p{0.45cm}}{14.0}          & 19.2          & \multicolumn{1}{p{0.45cm}}{20.2}          & 8.9           & \multicolumn{1}{p{0.45cm}}{35.4}          & 9.3           \\ 
Raw-HierRank                                & \multicolumn{1}{p{0.45cm}}{5.1}           & 13.5          & \multicolumn{1}{p{0.45cm}}{13.5}          & 17.8          & \multicolumn{1}{p{0.45cm}}{30.4}          & 20.0          & \multicolumn{1}{p{0.45cm}}{45.5}          & 20.0          & \multicolumn{1}{p{0.45cm}}{69.1}          & 18.2          \\ \hline
ClusHMC-vanilla                             & \multicolumn{1}{p{0.45cm}}{32.7}          & 86.2          & \multicolumn{1}{p{0.45cm}}{54.4}          & 73.0          & \multicolumn{1}{p{0.45cm}}{76.6}          & 50.5          & \multicolumn{1}{p{0.45cm}}{85.8}          & 37.7          & \multicolumn{1}{p{0.45cm}}{93.9}          & 24.8          \\ 
ClusHMC-bagging                             & \multicolumn{1}{p{0.45cm}}{33.9}          & 89.3          & \multicolumn{1}{p{0.45cm}}{56.7}          & 74.7          & \multicolumn{1}{p{0.45cm}}{76.8}          & 50.6          & \multicolumn{1}{p{0.45cm}}{86.5}          & 38.0          & \multicolumn{1}{p{0.45cm}}{94.3}          & 24.9          \\ \hline
HIROM-hier.ranking                          & \multicolumn{1}{p{0.45cm}}{35.5}          & 93.5          & \multicolumn{1}{p{0.45cm}}{55.6}          & 81.0          & \multicolumn{1}{p{0.45cm}}{81.1}          & 53.5          & \multicolumn{1}{p{0.45cm}}{84.1}          & 36.9          & \multicolumn{1}{p{0.45cm}}{88.7}          & 23.4          \\ 
HIROM-hier.Hamming                          & \multicolumn{1}{p{0.45cm}}{35.7}          & 94.2          & \multicolumn{1}{p{0.45cm}}{59.7}          & 78.8          & \multicolumn{1}{p{0.45cm}}{85.4}          & 56.3          & \multicolumn{1}{p{0.45cm}}{89.6}          & 39.4          & \multicolumn{1}{p{0.45cm}}{92.6}          & 24.4          \\ \hline
\Method\/NaiveSort-full & \multicolumn{1}{p{0.45cm}}{\textbf{36.6}} & \textbf{96.6} & \multicolumn{1}{p{0.45cm}}{\textbf{64.7}} & \textbf{85.3} & \multicolumn{1}{p{0.45cm}}{\textbf{86.8}} & \textbf{57.2} & \multicolumn{1}{p{0.45cm}}{\textbf{93.9}} & \textbf{41.3} & \multicolumn{1}{p{0.45cm}}{98.6} & \textbf{26.0} \\ 
\Method\/HierRank-full & \multicolumn{1}{p{0.45cm}}{\textbf{36.6}} & \textbf{96.6} & \multicolumn{1}{p{0.45cm}}{\textbf{64.7}} & \textbf{85.3} & \multicolumn{1}{p{0.45cm}}{\textbf{86.8}} & \textbf{57.2} & \multicolumn{1}{p{0.45cm}}{\textbf{93.9}} & \textbf{41.3} & \multicolumn{1}{p{0.45cm}}{\textbf{98.7}} & \textbf{26.0} \\ \hline
\end{tabular}
\end{table}

%%%%%%%%%%%%%%%%%%%%%%%%%%%%%%%%%%%%%%%%%%%%%%%%%%%%%%%%%%%%%%%%%%%%%%%%%%%%%%%%%%%%%%%
%%%%%%%%%%%%%%%%%%%%%%%%%%%%%%%%%%%%%%%%%%%%%%%%%%%%%%%%%%%%%%%%%%%%%%%%%%%%%%%%%%%%%%% 

\noindent \textit{Results Based on Evaluation Criterion II}. 
We split the original testing dataset equally into a validation set and a new testing dataset ($5,000$ objects in each) and followed the procedure outlined in Section \ref{sec:cutoff_selection} to determine the cutoff $\hat{\kappa}$ based on the validation dataset.
{\color{black} 
The FDP values were computed on the new testing dataset when $\hat{\kappa}$ was chosen to target an FDR of $\alpha \times 100\%$, with $\alpha = $0.01, 0.05, 0.1, and 0.2. Similarly, The $F_1$ scores were calculated on the new testing dataset when $\hat{\kappa}$ was selected by maximizing the $F_1$ score on the validation dataset.
%We split the original testing dataset equally into a validation set and a new testing dataset ($5,000$ objects in each), and then followed the procedure in Section \ref{sec:cutoff_selection} to determine the cutoff $\kappa$ based on the validation dataset.
In addition to the FDP and $F_1$ scores, we also reported the corresponding discovery proportion (d.p. := \# predicted positives / \# all events).} The results are summarized in Tables \ref{tbl:synthetic_cutoff_FDR} and \ref{tbl:synthetic_cutoff_fmax}. 

As shown in Table \ref{tbl:synthetic_cutoff_FDR}, \Method\/HierRank-full made the most discoveries while effectively controlling the FDP. Table \ref{tbl:synthetic_cutoff_fmax} demonstrates that \Method\/HierRank-full achieved the highest $F_1$ score and the lowest FDP. Additionally, Table \ref{tbl:synthetic_cutoff_FDR} indicates that the obtained FDP closely matched the target FDR (i.e., $\alpha$) for all methods except Raw-NaiveSort and Raw-HierRank. {\color{black} Furthermore, results in Table S4 in Supplement D.4 show that the $F_1$ score obtained on the new testing dataset nearly matched the highest achievable $F_1$ score on the validation dataset.} These results demonstrate that the strategy described in Section \ref{sec:cutoff_selection} successfully produced a satisfactory cutoff, as expected.

\begin{table}[tp]
\caption{{\color{black}
    Observed False Discovery Proportion (FDP) and Discovery proportion (d.p.)
    %$:= \text{\{\# of predicted positives\}/\{total \# of events\}}$) 
    on the synthetic testing dataset with the cutoff $\hat{\kappa}$  chosen to target an FDR of $\alpha \times 100\%$, for $\alpha$ values of 0.01, 0.05, 0.1, and 0.2.} The highest values in each d.p. column are highlighted in bold. All values are expressed as percentages.
    %Observed false discovery proportion (FDP) on the synthetic testing dataset, which was obtained by applying the cutoff value selected to aim at a target FDR on the validation dataset as the strategy in Section \ref{sec:cutoff_selection}. Here, $\alpha$ is the {\color{black} the targeted FDR level} on the validation dataset. Discovery proportion (``d.p.'') $:= \text{\{\# of discoveries\}/\{total \# of events\}}$. The highest values in each d.p. column are shown in bold. All values are percentages.
}\label{tbl:synthetic_cutoff_FDR}  
  \centering  
\begin{tabular}{l|ll|ll|ll|ll}
\hline
\multicolumn{1}{r|}{Target $\alpha \times 100$}                            & \multicolumn{2}{l|}{1.0}                         & \multicolumn{2}{l|}{5.0}                         & \multicolumn{2}{l|}{10.0}                        & \multicolumn{2}{l}{20.0}                          \\ \hline
Method                                               & \multicolumn{1}{l}{d.p.}          & FDP          & \multicolumn{1}{l}{d.p.}          & FDP          & \multicolumn{1}{l}{d.p.}          & FDP          & \multicolumn{1}{l}{d.p.}           & FDP           \\ \hline\hline
Raw-NaiveSort                                         & \multicolumn{1}{l}{0.002}        & 0.0          & \multicolumn{1}{l}{0.002}        & 0.0          & \multicolumn{1}{l}{0.026}        & 31.3         & \multicolumn{1}{l}{0.032}         & 37.5          \\ 
Raw-HierRank                                          & \multicolumn{1}{l}{0.002}        & 0.0          & \multicolumn{1}{l}{0.005}        & 5.1          & \multicolumn{1}{l}{0.1}          & 9.5          & \multicolumn{1}{l}{1.0}           & 19.5          \\ \hline
ClusHMC-vanilla                                       & \multicolumn{1}{l}{0.007}        & 0.0          & \multicolumn{1}{l}{0.007}        & 0.0          & \multicolumn{1}{l}{3.5}          & 9.7          & \multicolumn{1}{l}{7.5}           & 19.2          \\ 
ClusHMC-bagging                                       & \multicolumn{1}{l}{0.002}        & 0.0          & \multicolumn{1}{l}{2.2}          & 5.1          & \multicolumn{1}{l}{4.6}          & 9.6          & \multicolumn{1}{l}{8.4}           & 19.6          \\ \hline
HIROM-hier.ranking                                    & \multicolumn{1}{l}{0.02}         & 0.0          & \multicolumn{1}{l}{2.9}          & 5.1          & \multicolumn{1}{l}{6.2}          & 9.7          & \multicolumn{1}{l}{10.3}          & 19.0          \\ 
HIROM-hier.Hamming                                    & \multicolumn{1}{l}{0.3}          & 3.5          & \multicolumn{1}{l}{4.6}          & 5.1          & \multicolumn{1}{l}{6.3}          & 9.5          & \multicolumn{1}{l}{8.3}           & 19.5          \\ \hline
\Method\/NaiveSort-full & \multicolumn{1}{l}{\textbf{2.7}} & 0.6 & \multicolumn{1}{l}{\textbf{5.8}} & 4.4 & \multicolumn{1}{l}{\textbf{8.3}} & 9.6 & \multicolumn{1}{l}{11.5} & 19.4 \\ 
\Method\/HierRank-full  & \multicolumn{1}{l}{\textbf{2.7}} & 0.6 & \multicolumn{1}{l}{\textbf{5.8}} & 4.4 & \multicolumn{1}{l}{\textbf{8.3}} & 9.6 & \multicolumn{1}{l}{\textbf{11.7}} & 19.5 \\ \hline
\end{tabular}
\end{table}

\begin{table}[tp]
  \caption{
  $F_1$ score on the synthetic testing dataset with the cutoff $\hat{\kappa}$  chosen to maximize the $F_1$ score on the validation dataset.  %which was obtained by applying the cutoff value to maximize the $F_1$ score on the validation dataset as the strategy introduced in Section \ref{sec:cutoff_selection}. 
  The corresponding FDP and d.p. are also reported. %Discovery proportion (``d.p.'') $:= \text{\{\# of discoveries\}/\{total \# of events\}}$. 
  The lowest FDP and the highest $F_1$ score are shown in bold. All values are percentages.}\label{tbl:synthetic_cutoff_fmax}  
  \centering
\begin{tabular}{l|ccc}
\hline
Method                                                & d.p.  & FDP  &  $F_1$ score  \\ \hline\hline
Raw-NaiveSort                                         & 98.9 & 86.7          & 29.0                         \\ 
Raw-HierRank                                          & 41.1 & 81.0          & 23.3                         \\ \hline
ClusHMC-vanilla                                       & 14.9 & 37.5          & 64.3                         \\ 
ClusHMC-bagging                                       & 11.6 & 27.2          & 66.5                         \\ \hline
HIROM-hier.ranking                                    & 13.6 & 27.6          & 73.2                         \\ 
HIROM-hier.Hamming                                    & 14.9 & 32.0          & 71.9                         \\ \hline
\Method\/NaiveSort-full & 12.8 & \textbf{23.9} & 74.8                         \\ 
\Method\/HierRank-full & 12.8 & \textbf{23.9} & \textbf{74.9}      \\ \hline
\end{tabular}
\end{table}

%%%%%%%%%%%%%%%%%%%%%%%%%%%%%%%%%%%%%%%%%%%%%%%%%%%%%%%%%%%%%%%%%%%%%%%%%%%%%%%%%%%%%%%
%%%%%%%%%%%%%%%%%%%%%%%%%%%%%%%%%%%%%%%%%%%%%%%%%%%%%%%%%%%%%%%%%%%%%%%%%%%%%%%%%%%%%%% 
\subsection{Results on Disease Diagnosis Dataset}\label{sec:disease_example}
%For the assessment using the disease diagnosis dataset, we followed the same training procedure used in \citet{huang2010} to obtain the binary Bayesian classifiers. Particularly, we used leave-one-out cross-validation (LOOCV) to obtain the Bayesian classification scores for each sample. To ensure a fair comparison, the competing methods were executed in the same LOOCV fashion. For this dataset, we include results for all three versions of the mLPR estimation procedure.
For the disease diagnosis dataset, we followed the same training procedure as \citet{huang2010} to obtain the binary Bayesian classifiers. Specifically, we used leave-one-out cross-validation (LOOCV) to calculate the Bayesian classification scores. To ensure a fair comparison, all competing methods were executed using the same LOOCV approach. For this dataset, we include results for all three versions of the mLPR estimation procedure.

For each method, we plotted the PR curve for the resulting ranking. As shown in Figure \ref{disease_diag_PR}~(a), {\color{black} among the three methods for deriving $\widehat{mLPR}$, the independence (indpt) approximation outperformed both the neighborhood (nbh) approximation and the full version.  
Despite the superior theoretical performance of the full version,
%and the neighborhood approximation, 
the independence approximation demonstrated greater practical robustness in this case. This result is attributed to the difficulty of appropriately incorporating dependency when estimating mLPR from a dataset with a low signal-to-noise ratio and a small sample size.} Additionally, HierRank produced a better ranking than NaiveSort for each version of the mLPR estimation. This corroborates our previous conclusion that HierRank provides a better ranking than NaiveSort when the input scores are deficient (e.g., the $\widehat{mLPR}$ values are imperfect).

As shown in Figure \ref{disease_diag_PR}~(b), \Method\/HierRank-indpt performed better than all of the other methods.  Furthermore, it performed noticeably better when the precision rate was high and the recall value was low (i.e., in the initial portion of the ranking). Specifically, our method outperformed C-HMCNN at high precision levels (i.e., when recall is approximately less than 0.125 and precision is close to 1), although C-HMCNN surpassed our approach when the recall approximately exceeded 0.4. This result suggests that our method may be more suitable for applications such as disease diagnosis, where the accuracy of the top decisions (or high precision) is more critical.

\begin{figure}[tp]
  \centering
  \begin{minipage}{0.49\linewidth}
    \centering
   \includegraphics[width=\linewidth]{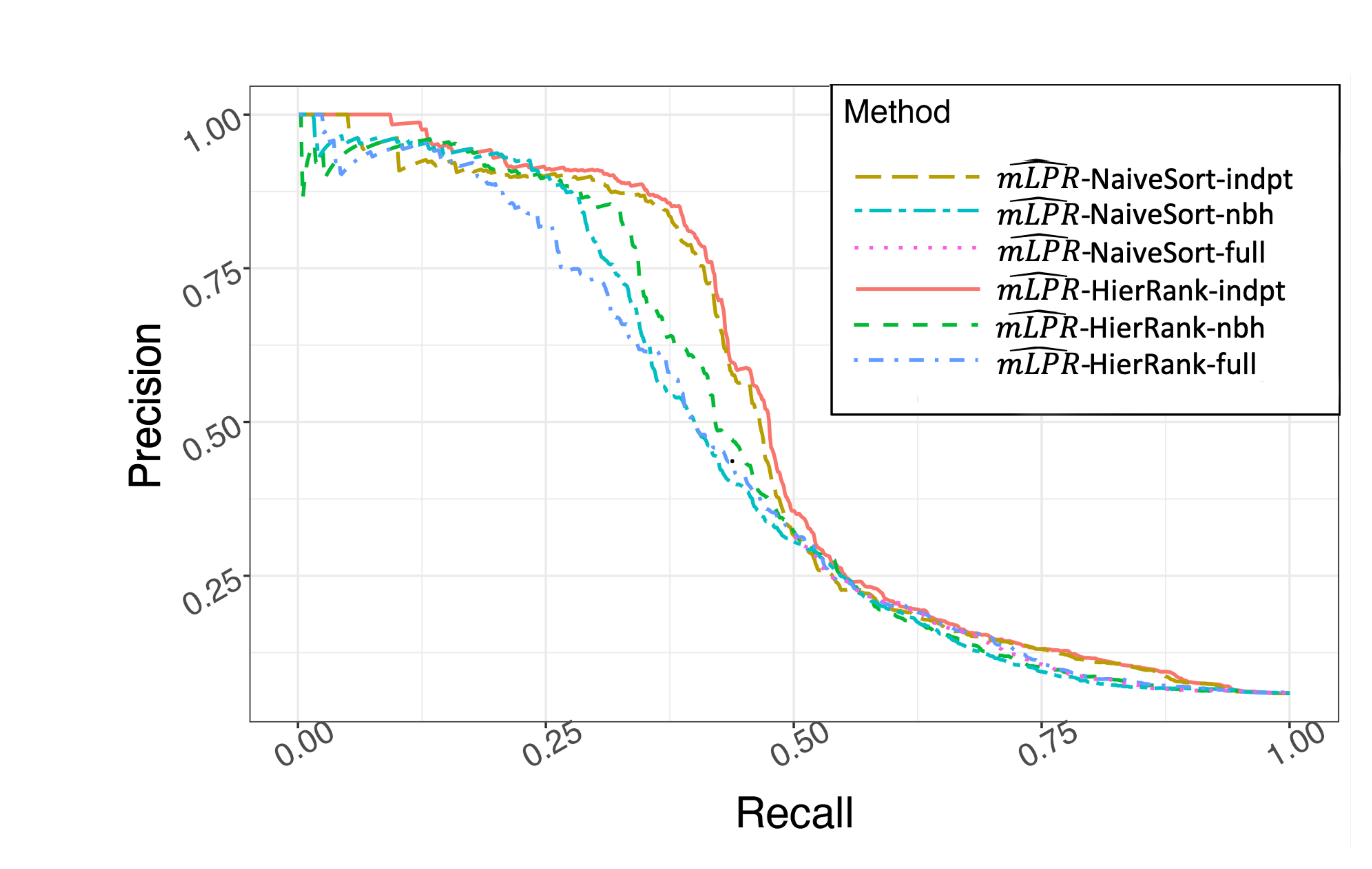}
    \subfloat[Variants of \Method\/based methods]{\hspace{\linewidth}}    
  \end{minipage}
  \begin{minipage}{0.49\linewidth}
    \centering
    \includegraphics[width=\linewidth,height=0.7\linewidth]{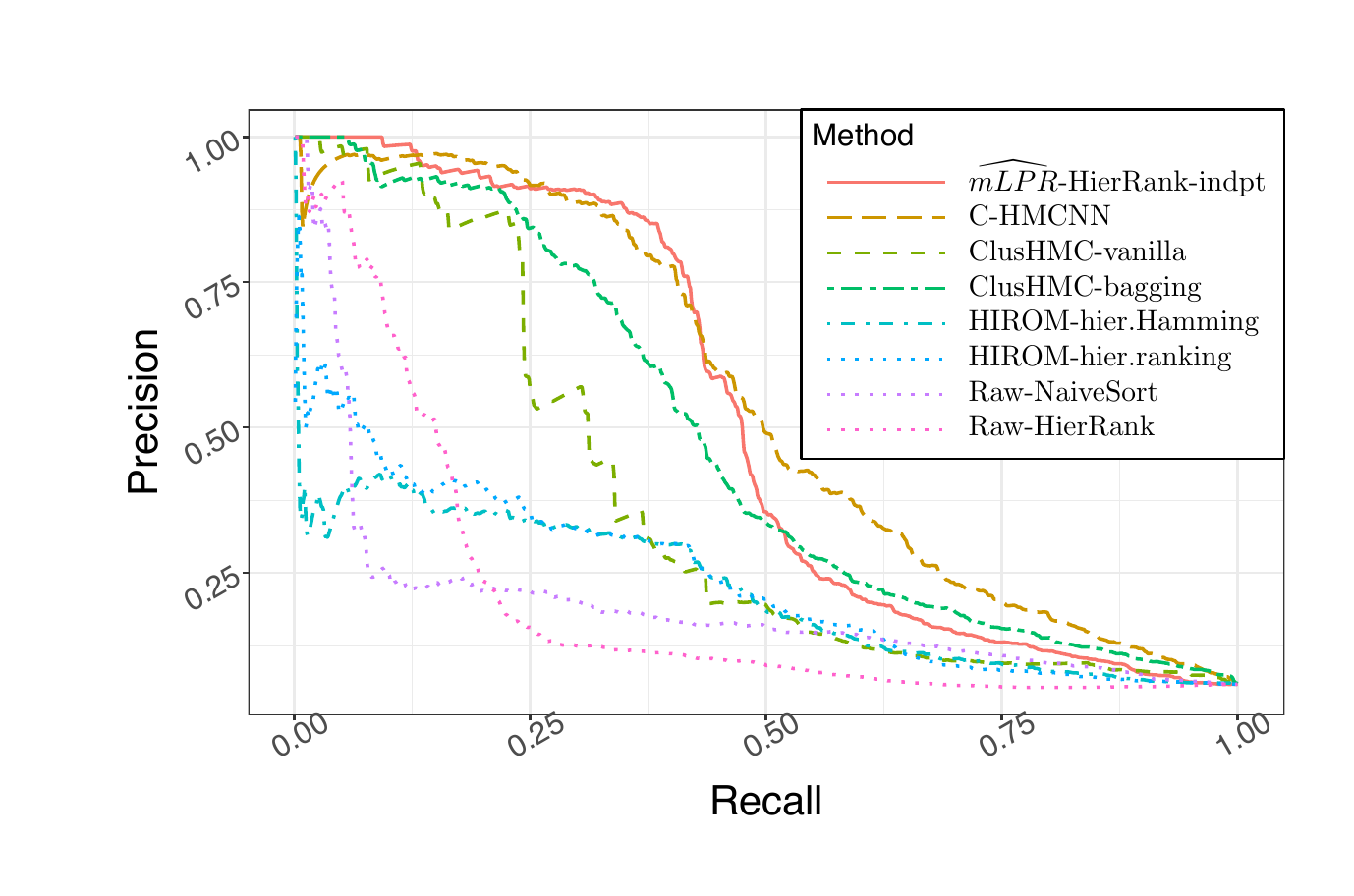}
    \subfloat[All methods]{\hspace{.5\linewidth}}    
  \end{minipage}
  \caption{Precision--recall curve for several classifiers applied to the disease diagnosis dataset of \citet{huang2010}. } \label{disease_diag_PR}
\end{figure}

\subsection{Results on RCV1v2 Dataset}\label{sec:rcv_example}
For the RCV1v2 dataset, we split the training dataset into two subsets, one for each stage of our method. Specifically, we trained class-wise support vector machines (SVMs) on the first training subset. The classifier scores output by these SVMs on the second training subset were then used to train models for mLPR estimation. This procedure was implemented to mitigate an overfitting issue that arose when both stages were trained on the same dataset.
%As the distributions of the classifier scores on the training dataset are different from those on the testing dataset, the models of the second stage can be overfitted if they are trained on the same training dataset as the first stage. 
We discuss this further in Supplement D.6 for interested readers.

Using the above strategy and following the procedure (without considering a validation set) described in Section \ref{sec:cutoff_selection}, we computed the precision and recall values for \Method\/HierRank and the competing methods on the RCV1v2 testing dataset, as shown in Table \ref{tbl:rcv_result}.
For $\kappa \leq 0.1$, the Raw-based and Clus-HMC-based methods performed worse than the HIROM variants and \Method\/-based methods. For $0.2 \leq \kappa \leq 0.3$, \Method\/-based methods outperformed all other methods. Results for $\kappa > 0.3$ are omitted because all precision values fell below $0.1$. Furthermore, the neighborhood approximation and the full version of mLPR estimation produced similar results, regardless of the sorting method used, and both were superior to the independence approximation.
These findings suggest that the neighborhood approximation is sufficient for achieving accurate mLPR estimation on this dataset. % while effectively incorporating hierarchical information.

%Using the above strategy and following the procedure (without considering a validation set) given in Section \ref{sec:cutoff_selection}, we computed the precision and the recall values for \Method\/HierRank and the competing methods on the RCV1v2 dataset; these are shown in Table \ref{tbl:rcv_result} (We omit the results for $\kappa > 0.3$ because all precision values fell below $0.1$). When $\kappa \leq 0.1$, the Raw-based and Clus-HMC--based methods were inferior to the HIROM variants and \Method\/based methods. When $0.2 \leq \kappa \leq 0.3$, \Method\/based methods outperformed all of the other methods. Additionally,the neighborhood approximation and the full version of mLPR estimation performed similarly, regardless of the sorting method, and both were superior to the independence approximation. These results indicate that using the neighborhood approximation is sufficient for achieving an accurate estimation of mLPRs on this dataset while taking into account hierarchical information. 

\begin{table}[tp]
  \caption{
  Recall and precision (prec) values on the RCV1v2 testing dataset. Here, $\kappa$ := \# predicted positives / \# all events. The highest values in each column are shown in bold. All values are percentages.
  %Recall and precision (``prec'') values for the rankings on the RCV1v2 dataset. Here, $\kappa := \text{\{\# of top events that were taken as discoveries\}/\{total \# of events\}}$. The highest values in each column are shown in bold. All values are percentages.
  }\label{tbl:rcv_result}  
  \centering
\begin{tabular}{l|ll|ll|ll|ll}
\hline
\multicolumn{1}{r|}{$\kappa~$}                                    & \multicolumn{2}{l|}{0.05}                          & \multicolumn{2}{l|}{0.1}                           & \multicolumn{2}{l|}{0.2} & \multicolumn{2}{l}{0.3}                                                   \\ \hline
                                                       & \multicolumn{1}{l}{recall}        & prec          & \multicolumn{1}{l}{recall}        & prec          & \multicolumn{1}{l}{recall}        & prec          & \multicolumn{1}{l}{recall}        & prec          \\ \hline\hline
Raw-NaiveSort                                          & \multicolumn{1}{l}{4.0}           & 2.5           & \multicolumn{1}{l}{5.3}           & 1.7           & \multicolumn{1}{l}{6.9}           & 1.1           & \multicolumn{1}{l}{8.5}           & 0.9                   \\
Raw-HierRank                                           & \multicolumn{1}{l}{7.3}           & 4.6           & \multicolumn{1}{l}{11.1}          & 3.5           & \multicolumn{1}{l}{17.5}          & 2.8           & \multicolumn{1}{l}{23.6}          & 2.5          \\ \hline
ClusHMC-vanilla                                        & \multicolumn{1}{l}{68.5}          & 43.2          & \multicolumn{1}{l}{80.0}          & 25.2          & \multicolumn{1}{l}{88.2}          & 13.9          & \multicolumn{1}{l}{90.8}          & 9.5           \\ 
ClusHMC-bagging                                        & \multicolumn{1}{l}{72.5}          & 45.6          & \multicolumn{1}{l}{83.7}          & 26.4          & \multicolumn{1}{l}{92.0}          & 14.5          & \multicolumn{1}{l}{95.7}          & 10.0          \\ \hline
HIROM-hier.ranking                                    & \multicolumn{1}{l}{77.1}          & 48.6          & \multicolumn{1}{l}{80.0}          & 25.2          & \multicolumn{1}{l}{85.9}          & 13.5          & \multicolumn{1}{l}{89.1}          & 9.4          \\ 
HIROM-hier.Hamming                                    & \multicolumn{1}{l}{75.4}          & 47.5          & \multicolumn{1}{l}{88.8}          & \textbf{28.0} & \multicolumn{1}{l}{91.7}          & 14.4          & \multicolumn{1}{l}{93.8}          & 9.8                   \\ \hline
%C-HMCNN                                           & \multicolumn{1}{l}{77.5}          & 48.6          & \multicolumn{1}{l}{88.3}          & 27.6          & \multicolumn{1}{l}{94.0}          & 14.7          & \multicolumn{1}{l}{96.8}          & 10.1                  \\ \hline
\Method\/NaiveSort-indpt & \multicolumn{1}{l}{74.9}          & 47.2          & \multicolumn{1}{l}{85.8}          & 27.0          & \multicolumn{1}{l}{92.4}          & 14.5          & \multicolumn{1}{l}{94.4}          & 9.9          \\ 
\Method\/NaiveSort-nbh   & \multicolumn{1}{l}{77.8}          & 49.0          & \multicolumn{1}{l}{88.8}          & \textbf{28.0} & \multicolumn{1}{l}{93.5}          & 14.7          & \multicolumn{1}{l}{\textbf{97.0}} & \textbf{10.2}\\ 
\Method\/NaiveSort-full  & \multicolumn{1}{l}{77.5}          & 48.8          & \multicolumn{1}{l}{\textbf{88.9}} & \textbf{28.0} & \multicolumn{1}{l}{93.9}          & \textbf{14.8} & \multicolumn{1}{l}{96.8}          & \textbf{10.2}\\ \hline
\Method\/HierRank-indpt  & \multicolumn{1}{l}{75.8}          & 47.9          & \multicolumn{1}{l}{86.6}          & 27.3          & \multicolumn{1}{l}{92.7}          & 14.6          & \multicolumn{1}{l}{95.2}          & 10.0          \\ 
\Method\/HierRank-nbh    & \multicolumn{1}{l}{\textbf{78.0}} & \textbf{49.1} & \multicolumn{1}{l}{\textbf{88.9}} & \textbf{28.0} & \multicolumn{1}{l}{\textbf{94.1}} & 14.7          & \multicolumn{1}{l}{96.5}          & \textbf{10.2}\\ 
\Method\/HierRank-full   & \multicolumn{1}{l}{77.5}          & 48.8          & \multicolumn{1}{l}{\textbf{88.9}} & \textbf{28.0} & \multicolumn{1}{l}{93.6}          & \textbf{14.8} & \multicolumn{1}{l}{\textbf{97.0}} & 10.1         \\ \hline
\end{tabular}
\end{table}

%%%%%%%%%%%%%%%%%%%%%%%%%%%%%%%%%%%%%%%%%%%%%%%%%%%%%%%%%%%%%%%%%%%%%%%%%%%%%%%%%%%%%%%
%%%%%%%%%%%%%%%%%%%%%%%%%%%%%%%%%%%%%%%%%%%%%%%%%%%%%%%%%%%%%%%%%%%%%%%%%%%%%%%%%%%%%%% 
\section{Discussion}\label{sec:hmc_conclusion}
In this article, we have introduced the mLPR quantity and demonstrated that sorting the true mLPRs in descending order can optimize the HMC performance, as measured by \Objective\/, while respecting the class hierarchy. As true mLPRs are not accessible, we have provided an approach to estimate them. We have developed a ranking algorithm, HierRank, that leads to the highest $\widehat{\Objective\/}$ under the hierarchical constraint. Our method can be easily employed in various HMC applications, including disease diagnosis, protein-function categorization, gene-function categorization, image classification, and text classification. Extensive experiments have demonstrated the superior performance of this approach compared to competing methods.

We conducted a comparison of three different versions of the mLPR estimation procedure, which varies in the extent of their graph usage. We found that the full version is the preferred choice when there are ample high-quality samples, in which case NaiveSort and  HierRank produce comparable results. When the data quality is poor, we recommend using the independence or neighborhood approximation for greater robustness. In such scenarios, HierRank can ensure the hierarchy compliance and boost performance. Selecting among the three versions from a theoretical standpoint remains a promising avenue for future investigation.

Finally, there is potential for further improving our method. While the \Method\/based methods have shown good performance, they hinge on the given class-wise classifiers which are not optimized under the hierarchy. To address this, an end-to-end classification system could be developed that takes the raw data (covariates) as input and aims to maximize $\Objective\/$ given the graph hierarchy.

\begin{acks}[Acknowledgments]
	We thank Xinwei Zhang, Calvin Chi, Jianbo Chen for their suggestions on this paper. 
\end{acks}

%%%%%%%%%%%%%%%%%%%%%%%%%%%%%%%%%%%%%%%%%%%%%%
%% Supplementary Material, if any, should   %%
%% be provided in {supplement} environment  %%
%% with title and short description.        %%
%%%%%%%%%%%%%%%%%%%%%%%%%%%%%%%%%%%%%%%%%%%%%%

%\input{appendix}

\begin{supplement}
  \stitle{Supplementary Material of ``Ranking hierarchical classification results with mLPRs''}
  \sdescription{In \href{https://drive.google.com/drive/folders/1B7l3MWpZBVzjnljpXfF_wDUXdZHLJqDS?usp=sharing}{Supplementary Material}, we provide more details about the hit curve. We discuss HierRank from various perspectives, including the formal version of HierRank, an equivalent algorithm of HierRank, a faster version of HierRank, and an extension of HierRank to DAG. We also provide theoretical justification of the cutoff selection procedure. Finally, we provide more empirical results and proofs of theorems presented in this article.}

\end{supplement}

\bibliographystyle{imsart-nameyear}
\bibliography{hierLPR}

\begin{thebibliography}{40}
% BibTex style file: imsart-nameyear.bst, 2017-11-03
% Default style options (sort=1,type=nameyear).
% Used options (sort=1,type=nameyear).

\bibitem[\protect\citeauthoryear{Alves, Delgado and Freitas}{2010}]{alves2010}
\begin{binproceedings}[author]
\bauthor{\bsnm{Alves},~\bfnm{Roberto~Teixeira}\binits{R.~T.}},
  \bauthor{\bsnm{Delgado},~\bfnm{MR}\binits{M.}} \AND
  \bauthor{\bsnm{Freitas},~\bfnm{Alex~Alves}\binits{A.~A.}}
(\byear{2010}).
\btitle{Knowledge discovery with artificial immune systems for hierarchical
  multi-label classification of protein functions}.
In \bbooktitle{Fuzzy Systems (FUZZ), 2010 IEEE International Conference}
\bpages{1--8}.
\bpublisher{IEEE}.
\end{binproceedings}
\endbibitem

\bibitem[\protect\citeauthoryear{Ananpiriyakul, Poomsirivilai and
  Vateekul}{2014}]{ananpiriyakul2014label}
\begin{binproceedings}[author]
\bauthor{\bsnm{Ananpiriyakul},~\bfnm{Thanawut}\binits{T.}},
  \bauthor{\bsnm{Poomsirivilai},~\bfnm{Piyapan}\binits{P.}} \AND
  \bauthor{\bsnm{Vateekul},~\bfnm{Peerapon}\binits{P.}}
(\byear{2014}).
\btitle{Label correction strategy on hierarchical multi-label classification}.
In \bbooktitle{International Workshop on Machine Learning and Data Mining in
  Pattern Recognition}
\bpages{213--227}.
\bpublisher{Springer}.
\end{binproceedings}
\endbibitem

\bibitem[\protect\citeauthoryear{Barutcuoglu, Schapire and
  Troyanskaya}{2006}]{barutcuoglu2006}
\begin{barticle}[author]
\bauthor{\bsnm{Barutcuoglu},~\bfnm{Zafer}\binits{Z.}},
  \bauthor{\bsnm{Schapire},~\bfnm{Robert~E}\binits{R.~E.}} \AND
  \bauthor{\bsnm{Troyanskaya},~\bfnm{Olga~G}\binits{O.~G.}}
(\byear{2006}).
\btitle{Hierarchical multi-label prediction of gene function}.
\bjournal{Bioinformatics}
\bvolume{22}
\bpages{830--836}.
\end{barticle}
\endbibitem

\bibitem[\protect\citeauthoryear{Bi and Kwok}{2011}]{bi2011}
\begin{binproceedings}[author]
\bauthor{\bsnm{Bi},~\bfnm{Wei}\binits{W.}} \AND
  \bauthor{\bsnm{Kwok},~\bfnm{James~T}\binits{J.~T.}}
(\byear{2011}).
\btitle{Multi-label classification on tree-and dag-structured hierarchies}.
In \bbooktitle{Proceedings of the 28th International Conference on Machine
  Learning}
\bpages{17--24}.
\end{binproceedings}
\endbibitem

\bibitem[\protect\citeauthoryear{Bi and Kwok}{2015}]{bi2015bayes}
\begin{barticle}[author]
\bauthor{\bsnm{Bi},~\bfnm{Wei}\binits{W.}} \AND
  \bauthor{\bsnm{Kwok},~\bfnm{Jame~T}\binits{J.~T.}}
(\byear{2015}).
\btitle{Bayes-Optimal Hierarchical Multilabel Classification}.
\bjournal{IEEE Transactions on Knowledge and Data Engineering}
\bvolume{27}
\bpages{2907--2918}.
\end{barticle}
\endbibitem

\bibitem[\protect\citeauthoryear{Blockeel et~al.}{2002}]{blockeel2002}
\begin{binproceedings}[author]
\bauthor{\bsnm{Blockeel},~\bfnm{Hendrik}\binits{H.}},
  \bauthor{\bsnm{Bruynooghe},~\bfnm{Maurice}\binits{M.}},
  \bauthor{\bsnm{Dzeroski},~\bfnm{Saso}\binits{S.}},
  \bauthor{\bsnm{Ramon},~\bfnm{Jan}\binits{J.}} \AND
  \bauthor{\bsnm{Struyf},~\bfnm{Jan}\binits{J.}}
(\byear{2002}).
\btitle{Hierarchical multi-classification}.
In \bbooktitle{Proceedings of the ACM SIGKDD 2002 Workshop on Multi-relational
  Data Mining (MRDM 2002)}
\bpages{21--35}.
\end{binproceedings}
\endbibitem

\bibitem[\protect\citeauthoryear{Blockeel et~al.}{2006}]{blockeel2006decision}
\begin{binproceedings}[author]
\bauthor{\bsnm{Blockeel},~\bfnm{Hendrik}\binits{H.}},
  \bauthor{\bsnm{Schietgat},~\bfnm{Leander}\binits{L.}},
  \bauthor{\bsnm{Struyf},~\bfnm{Jan}\binits{J.}},
  \bauthor{\bsnm{D{\v{z}}eroski},~\bfnm{Sa{\v{s}}o}\binits{S.}} \AND
  \bauthor{\bsnm{Clare},~\bfnm{Amanda}\binits{A.}}
(\byear{2006}).
\btitle{Decision trees for hierarchical multilabel classification: A case study
  in functional genomics}.
In \bbooktitle{European Conference on Principles of Data Mining and Knowledge
  Discovery}
\bpages{18--29}.
\bpublisher{Springer}.
\end{binproceedings}
\endbibitem

\bibitem[\protect\citeauthoryear{Cesa-Bianchi, Gentile and
  Zaniboni}{2006}]{cesa2006hierarchical}
\begin{binproceedings}[author]
\bauthor{\bsnm{Cesa-Bianchi},~\bfnm{Nicol{\`o}}\binits{N.}},
  \bauthor{\bsnm{Gentile},~\bfnm{Claudio}\binits{C.}} \AND
  \bauthor{\bsnm{Zaniboni},~\bfnm{Luca}\binits{L.}}
(\byear{2006}).
\btitle{Hierarchical classification: combining Bayes with SVM}.
In \bbooktitle{Proceedings of the 23rd International Conference on Machine
  Learning}
\bpages{177--184}.
\bpublisher{ACM}.
\end{binproceedings}
\endbibitem

\bibitem[\protect\citeauthoryear{Chen et~al.}{2019}]{chen2019deep}
\begin{binproceedings}[author]
\bauthor{\bsnm{Chen},~\bfnm{Haomin}\binits{H.}},
  \bauthor{\bsnm{Miao},~\bfnm{Shun}\binits{S.}},
  \bauthor{\bsnm{Xu},~\bfnm{Daguang}\binits{D.}},
  \bauthor{\bsnm{Hager},~\bfnm{Gregory~D}\binits{G.~D.}} \AND
  \bauthor{\bsnm{Harrison},~\bfnm{Adam~P}\binits{A.~P.}}
(\byear{2019}).
\btitle{Deep hierarchical multi-label classification of chest X-ray images}.
In \bbooktitle{International Conference on Medical Imaging with Deep Learning}
\bpages{109--120}.
\bpublisher{PMLR}.
\end{binproceedings}
\endbibitem

\bibitem[\protect\citeauthoryear{Davis and Goadrich}{2006}]{davis2006}
\begin{binproceedings}[author]
\bauthor{\bsnm{Davis},~\bfnm{Jesse}\binits{J.}} \AND
  \bauthor{\bsnm{Goadrich},~\bfnm{Mark}\binits{M.}}
(\byear{2006}).
\btitle{The relationship between Precision-Recall and ROC curves}.
In \bbooktitle{Proceedings of the 23rd International Conference on Machine
  Learning}
\bpages{233--240}.
\bpublisher{ACM}.
\end{binproceedings}
\endbibitem

\bibitem[\protect\citeauthoryear{DeCoro, Barutcuoglu and
  Fiebrink}{2007}]{decoro2007bayesian}
\begin{barticle}[author]
\bauthor{\bsnm{DeCoro},~\bfnm{Christopher}\binits{C.}},
  \bauthor{\bsnm{Barutcuoglu},~\bfnm{Zafer}\binits{Z.}} \AND
  \bauthor{\bsnm{Fiebrink},~\bfnm{Rebecca}\binits{R.}}
(\byear{2007}).
\btitle{Bayesian Aggregation for Hierarchical Genre Classification.}
\bjournal{International Society for Music Information Retrieval}
\bpages{77--80}.
\end{barticle}
\endbibitem

\bibitem[\protect\citeauthoryear{Efron}{2012}]{efron2012}
\begin{bbook}[author]
\bauthor{\bsnm{Efron},~\bfnm{Bradley}\binits{B.}}
(\byear{2012}).
\btitle{Large-scale inference: empirical Bayes methods for estimation, testing,
  and prediction}
\bvolume{1}.
\bpublisher{Cambridge University Press}.
\end{bbook}
\endbibitem

\bibitem[\protect\citeauthoryear{Feng, Fu and
  Zheng}{2017}]{feng2017hierarchical}
\begin{barticle}[author]
\bauthor{\bsnm{Feng},~\bfnm{Shou}\binits{S.}},
  \bauthor{\bsnm{Fu},~\bfnm{Ping}\binits{P.}} \AND
  \bauthor{\bsnm{Zheng},~\bfnm{Wenbin}\binits{W.}}
(\byear{2017}).
\btitle{A hierarchical multi-label classification algorithm for gene function
  prediction}.
\bjournal{Algorithms}
\bvolume{10}
\bpages{138}.
\end{barticle}
\endbibitem

\bibitem[\protect\citeauthoryear{Giunchiglia and
  Lukasiewicz}{2020}]{giunchiglia2020neurips}
\begin{binproceedings}[author]
\bauthor{\bsnm{Giunchiglia},~\bfnm{Eleonora}\binits{E.}} \AND
  \bauthor{\bsnm{Lukasiewicz},~\bfnm{Thomas}\binits{T.}}
(\byear{2020}).
\btitle{Coherent Hierarchical Multi-label Classification Networks}.
In \bbooktitle{34th Conference on Neural Information Processing Systems
  (NeurIPS 2020)}.
\end{binproceedings}
\endbibitem

\bibitem[\protect\citeauthoryear{Giunchiglia and
  Lukasiewicz}{2021}]{giunchiglia2021multi}
\begin{barticle}[author]
\bauthor{\bsnm{Giunchiglia},~\bfnm{Eleonora}\binits{E.}} \AND
  \bauthor{\bsnm{Lukasiewicz},~\bfnm{Thomas}\binits{T.}}
(\byear{2021}).
\btitle{Multi-label classification neural networks with hard logical
  constraints}.
\bjournal{Journal of Artificial Intelligence Research}
\bvolume{72}
\bpages{759--818}.
\end{barticle}
\endbibitem

\bibitem[\protect\citeauthoryear{Gupta et~al.}{2016}]{gupta2016product}
\begin{barticle}[author]
\bauthor{\bsnm{Gupta},~\bfnm{Vivek}\binits{V.}},
  \bauthor{\bsnm{Karnick},~\bfnm{Harish}\binits{H.}},
  \bauthor{\bsnm{Bansal},~\bfnm{Ashendra}\binits{A.}} \AND
  \bauthor{\bsnm{Jhala},~\bfnm{Pradhuman}\binits{P.}}
(\byear{2016}).
\btitle{Product classification in e-commerce using distributional semantics}.
\bjournal{arXiv preprint arXiv:1606.06083}.
\end{barticle}
\endbibitem

\bibitem[\protect\citeauthoryear{Hand}{2009}]{hand2009}
\begin{barticle}[author]
\bauthor{\bsnm{Hand},~\bfnm{David~J}\binits{D.~J.}}
(\byear{2009}).
\btitle{Measuring classifier performance: a coherent alternative to the area
  under the ROC curve}.
\bjournal{Machine learning}
\bvolume{77}
\bpages{103--123}.
\end{barticle}
\endbibitem

\bibitem[\protect\citeauthoryear{Herskovic, Iyengar and
  Bernstam}{2007}]{herskovic2007}
\begin{barticle}[author]
\bauthor{\bsnm{Herskovic},~\bfnm{Jorge~R}\binits{J.~R.}},
  \bauthor{\bsnm{Iyengar},~\bfnm{M~Sriram}\binits{M.~S.}} \AND
  \bauthor{\bsnm{Bernstam},~\bfnm{Elmer~V}\binits{E.~V.}}
(\byear{2007}).
\btitle{Using hit curves to compare search algorithm performance}.
\bjournal{Journal of Biomedical Informatics}
\bvolume{40}
\bpages{93--99}.
\end{barticle}
\endbibitem

\bibitem[\protect\citeauthoryear{Ho et~al.}{2018}]{ho2018hierlpr}
\begin{barticle}[author]
\bauthor{\bsnm{Ho},~\bfnm{Christine}\binits{C.}},
  \bauthor{\bsnm{Ye},~\bfnm{Yuting}\binits{Y.}},
  \bauthor{\bsnm{Jiang},~\bfnm{Ci-Ren}\binits{C.-R.}},
  \bauthor{\bsnm{Lee},~\bfnm{Wayne~Tai}\binits{W.~T.}} \AND
  \bauthor{\bsnm{Huang},~\bfnm{Haiyan}\binits{H.}}
(\byear{2018}).
\btitle{Hierlpr: Decision making in hierarchical multi-label classification
  with local precision rates}.
\bjournal{arXiv preprint arXiv:1810.07954}.
\end{barticle}
\endbibitem

\bibitem[\protect\citeauthoryear{Huang, Liu and Zhou}{2010}]{huang2010}
\begin{barticle}[author]
\bauthor{\bsnm{Huang},~\bfnm{Haiyan}\binits{H.}},
  \bauthor{\bsnm{Liu},~\bfnm{Chun-Chi}\binits{C.-C.}} \AND
  \bauthor{\bsnm{Zhou},~\bfnm{Xianghong~Jasmine}\binits{X.~J.}}
(\byear{2010}).
\btitle{Bayesian approach to transforming public gene expression repositories
  into disease diagnosis databases}.
\bjournal{Proceedings of the National Academy of Sciences}
\bvolume{107}
\bpages{6823--6828}.
\end{barticle}
\endbibitem

\bibitem[\protect\citeauthoryear{Jiang}{2017}]{jiang2017uniform}
\begin{binproceedings}[author]
\bauthor{\bsnm{Jiang},~\bfnm{Heinrich}\binits{H.}}
(\byear{2017}).
\btitle{Uniform convergence rates for kernel density estimation}.
In \bbooktitle{International Conference on Machine Learning}
\bpages{1694--1703}.
\bpublisher{PMLR}.
\end{binproceedings}
\endbibitem

\bibitem[\protect\citeauthoryear{Jiang et~al.}{2014}]{jiang2014}
\begin{barticle}[author]
\bauthor{\bsnm{Jiang},~\bfnm{Ci-Ren}\binits{C.-R.}},
  \bauthor{\bsnm{Liu},~\bfnm{Chun-Chi}\binits{C.-C.}},
  \bauthor{\bsnm{Zhou},~\bfnm{Xianghong~J}\binits{X.~J.}} \AND
  \bauthor{\bsnm{Huang},~\bfnm{Haiyan}\binits{H.}}
(\byear{2014}).
\btitle{Optimal Ranking in Multi-label Classification Using Local Precision
  Rates}.
\bjournal{Statistica Sinica}
\bvolume{24}
\bpages{1547--1570}.
\end{barticle}
\endbibitem

\bibitem[\protect\citeauthoryear{Kahanda and
  Ben-Hur}{2017}]{kahanda2017gostruct}
\begin{binproceedings}[author]
\bauthor{\bsnm{Kahanda},~\bfnm{Indika}\binits{I.}} \AND
  \bauthor{\bsnm{Ben-Hur},~\bfnm{Asa}\binits{A.}}
(\byear{2017}).
\btitle{Gostruct 2.0: Automated protein function prediction for annotated
  proteins}.
In \bbooktitle{Proceedings of the 8th ACM International Conference on
  Bioinformatics, Computational Biology, and Health Informatics}
\bpages{60--66}.
\end{binproceedings}
\endbibitem

\bibitem[\protect\citeauthoryear{Lewis et~al.}{2004}]{lewis2004rcv1}
\begin{barticle}[author]
\bauthor{\bsnm{Lewis},~\bfnm{David~D}\binits{D.~D.}},
  \bauthor{\bsnm{Yang},~\bfnm{Yiming}\binits{Y.}},
  \bauthor{\bsnm{Rose},~\bfnm{Tony~G}\binits{T.~G.}} \AND
  \bauthor{\bsnm{Li},~\bfnm{Fan}\binits{F.}}
(\byear{2004}).
\btitle{Rcv1: A new benchmark collection for text categorization research}.
\bjournal{Journal of Machine Learning Research}
\bvolume{5}
\bpages{361--397}.
\end{barticle}
\endbibitem

\bibitem[\protect\citeauthoryear{Makrodimitris, van Ham and
  Reinders}{2019}]{makrodimitris2019improving}
\begin{barticle}[author]
\bauthor{\bsnm{Makrodimitris},~\bfnm{Stavros}\binits{S.}},
  \bauthor{\bparticle{van} \bsnm{Ham},~\bfnm{Roeland~CHJ}\binits{R.~C.}} \AND
  \bauthor{\bsnm{Reinders},~\bfnm{Marcel~JT}\binits{M.~J.}}
(\byear{2019}).
\btitle{Improving protein function prediction using protein sequence and
  GO-term similarities}.
\bjournal{Bioinformatics}
\bvolume{35}
\bpages{1116--1124}.
\end{barticle}
\endbibitem

\bibitem[\protect\citeauthoryear{Nakano, Lietaert and
  Vens}{2019}]{nakano2019machine}
\begin{barticle}[author]
\bauthor{\bsnm{Nakano},~\bfnm{Felipe~Kenji}\binits{F.~K.}},
  \bauthor{\bsnm{Lietaert},~\bfnm{Mathias}\binits{M.}} \AND
  \bauthor{\bsnm{Vens},~\bfnm{Celine}\binits{C.}}
(\byear{2019}).
\btitle{Machine learning for discovering missing or wrong protein function
  annotations}.
\bjournal{BMC bioinformatics}
\bvolume{20}
\bpages{1--32}.
\end{barticle}
\endbibitem

\bibitem[\protect\citeauthoryear{Pham et~al.}{2021}]{pham2021interpreting}
\begin{barticle}[author]
\bauthor{\bsnm{Pham},~\bfnm{Hieu~H}\binits{H.~H.}},
  \bauthor{\bsnm{Le},~\bfnm{Tung~T}\binits{T.~T.}},
  \bauthor{\bsnm{Tran},~\bfnm{Dat~Q}\binits{D.~Q.}},
  \bauthor{\bsnm{Ngo},~\bfnm{Dat~T}\binits{D.~T.}} \AND
  \bauthor{\bsnm{Nguyen},~\bfnm{Ha~Q}\binits{H.~Q.}}
(\byear{2021}).
\btitle{Interpreting chest X-rays via CNNs that exploit hierarchical disease
  dependencies and uncertainty labels}.
\bjournal{Neurocomputing}
\bvolume{437}
\bpages{186--194}.
\end{barticle}
\endbibitem

\bibitem[\protect\citeauthoryear{Ploner
  et~al.}{2006}]{ploner2006multidimensional}
\begin{barticle}[author]
\bauthor{\bsnm{Ploner},~\bfnm{Alexander}\binits{A.}},
  \bauthor{\bsnm{Calza},~\bfnm{Stefano}\binits{S.}},
  \bauthor{\bsnm{Gusnanto},~\bfnm{Arief}\binits{A.}} \AND
  \bauthor{\bsnm{Pawitan},~\bfnm{Yudi}\binits{Y.}}
(\byear{2006}).
\btitle{Multidimensional local false discovery rate for microarray studies}.
\bjournal{Bioinformatics}
\bvolume{22}
\bpages{556--565}.
\end{barticle}
\endbibitem

\bibitem[\protect\citeauthoryear{Salama and
  El-Gohary}{2016}]{salama2016semantic}
\begin{barticle}[author]
\bauthor{\bsnm{Salama},~\bfnm{Dareen~M}\binits{D.~M.}} \AND
  \bauthor{\bsnm{El-Gohary},~\bfnm{Nora~M}\binits{N.~M.}}
(\byear{2016}).
\btitle{Semantic text classification for supporting automated compliance
  checking in construction}.
\bjournal{Journal of Computing in Civil Engineering}
\bvolume{30}
\bpages{04014106}.
\end{barticle}
\endbibitem

\bibitem[\protect\citeauthoryear{Schietgat
  et~al.}{2010}]{schietgat2010predicting}
\begin{barticle}[author]
\bauthor{\bsnm{Schietgat},~\bfnm{Leander}\binits{L.}},
  \bauthor{\bsnm{Vens},~\bfnm{Celine}\binits{C.}},
  \bauthor{\bsnm{Struyf},~\bfnm{Jan}\binits{J.}},
  \bauthor{\bsnm{Blockeel},~\bfnm{Hendrik}\binits{H.}},
  \bauthor{\bsnm{Kocev},~\bfnm{Dragi}\binits{D.}} \AND
  \bauthor{\bsnm{D{\v{z}}eroski},~\bfnm{Sa{\v{s}}o}\binits{S.}}
(\byear{2010}).
\btitle{Predicting gene function using hierarchical multi-label decision tree
  ensembles}.
\bjournal{BMC bioinformatics}
\bvolume{11}
\bpages{1--14}.
\end{barticle}
\endbibitem

\bibitem[\protect\citeauthoryear{Silla and Freitas}{2011}]{silla2011survey}
\begin{barticle}[author]
\bauthor{\bsnm{Silla},~\bfnm{Carlos~N}\binits{C.~N.}} \AND
  \bauthor{\bsnm{Freitas},~\bfnm{Alex~A}\binits{A.~A.}}
(\byear{2011}).
\btitle{A survey of hierarchical classification across different application
  domains}.
\bjournal{Data Mining and Knowledge Discovery}
\bvolume{22}
\bpages{31--72}.
\end{barticle}
\endbibitem

\bibitem[\protect\citeauthoryear{Triguero and
  Vens}{2016}]{triguero2016labelling}
\begin{barticle}[author]
\bauthor{\bsnm{Triguero},~\bfnm{Isaac}\binits{I.}} \AND
  \bauthor{\bsnm{Vens},~\bfnm{Celine}\binits{C.}}
(\byear{2016}).
\btitle{Labelling strategies for hierarchical multi-label classification
  techniques}.
\bjournal{Pattern Recognition}
\bvolume{56}
\bpages{170--183}.
\end{barticle}
\endbibitem

\bibitem[\protect\citeauthoryear{Valentini}{2011}]{valentini2011}
\begin{barticle}[author]
\bauthor{\bsnm{Valentini},~\bfnm{Giorgio}\binits{G.}}
(\byear{2011}).
\btitle{True path rule hierarchical ensembles for genome-wide gene function
  prediction}.
\bjournal{IEEE/ACM Transactions on Computational Biology and Bioinformatics}
\bvolume{8}
\bpages{832--847}.
\end{barticle}
\endbibitem

\bibitem[\protect\citeauthoryear{Vens et~al.}{2008}]{vens2008}
\begin{barticle}[author]
\bauthor{\bsnm{Vens},~\bfnm{Celine}\binits{C.}},
  \bauthor{\bsnm{Struyf},~\bfnm{Jan}\binits{J.}},
  \bauthor{\bsnm{Schietgat},~\bfnm{Leander}\binits{L.}},
  \bauthor{\bsnm{D{\v{z}}eroski},~\bfnm{Sa{\v{s}}o}\binits{S.}} \AND
  \bauthor{\bsnm{Blockeel},~\bfnm{Hendrik}\binits{H.}}
(\byear{2008}).
\btitle{Decision trees for hierarchical multi-label classification}.
\bjournal{Machine Learning}
\bvolume{73}
\bpages{185--214}.
\end{barticle}
\endbibitem

\bibitem[\protect\citeauthoryear{Wainwright and
  Jordan}{2008}]{wainwright2008graphical}
\begin{bbook}[author]
\bauthor{\bsnm{Wainwright},~\bfnm{Martin~J}\binits{M.~J.}} \AND
  \bauthor{\bsnm{Jordan},~\bfnm{Michael~Irwin}\binits{M.~I.}}
(\byear{2008}).
\btitle{Graphical models, exponential families, and variational inference}.
\bpublisher{Now Publishers Inc}.
\end{bbook}
\endbibitem

\bibitem[\protect\citeauthoryear{Wehrmann, Cerri and
  Barros}{2018}]{wehrmann2018hierarchical}
\begin{binproceedings}[author]
\bauthor{\bsnm{Wehrmann},~\bfnm{Jonatas}\binits{J.}},
  \bauthor{\bsnm{Cerri},~\bfnm{Ricardo}\binits{R.}} \AND
  \bauthor{\bsnm{Barros},~\bfnm{Rodrigo}\binits{R.}}
(\byear{2018}).
\btitle{Hierarchical multi-label classification networks}.
In \bbooktitle{International conference on machine learning}
\bpages{5075--5084}.
\bpublisher{PMLR}.
\end{binproceedings}
\endbibitem

\bibitem[\protect\citeauthoryear{Wehrmann
  et~al.}{2017}]{wehrmann2017hierarchical}
\begin{binproceedings}[author]
\bauthor{\bsnm{Wehrmann},~\bfnm{J{\^o}natas}\binits{J.}},
  \bauthor{\bsnm{Barros},~\bfnm{Rodrigo~C}\binits{R.~C.}},
  \bauthor{\bsnm{D{\^o}res},~\bfnm{Silvia N~das}\binits{S.~N.~d.}} \AND
  \bauthor{\bsnm{Cerri},~\bfnm{Ricardo}\binits{R.}}
(\byear{2017}).
\btitle{Hierarchical multi-label classification with chained neural networks}.
In \bbooktitle{Proceedings of the Symposium on Applied Computing}
\bpages{790--795}.
\end{binproceedings}
\endbibitem

\bibitem[\protect\citeauthoryear{Yang et~al.}{2018}]{yang2018visually}
\begin{barticle}[author]
\bauthor{\bsnm{Yang},~\bfnm{Liping}\binits{L.}},
  \bauthor{\bsnm{MacEachren},~\bfnm{Alan~M}\binits{A.~M.}},
  \bauthor{\bsnm{Mitra},~\bfnm{Prasenjit}\binits{P.}} \AND
  \bauthor{\bsnm{Onorati},~\bfnm{Teresa}\binits{T.}}
(\byear{2018}).
\btitle{Visually-enabled active deep learning for (geo) text and image
  classification: a review}.
\bjournal{ISPRS International Journal of Geo-Information}
\bvolume{7}
\bpages{65}.
\end{barticle}
\endbibitem

\bibitem[\protect\citeauthoryear{Zeng et~al.}{2017}]{zeng2017knowledge}
\begin{barticle}[author]
\bauthor{\bsnm{Zeng},~\bfnm{Chunqiu}\binits{C.}},
  \bauthor{\bsnm{Zhou},~\bfnm{Wubai}\binits{W.}},
  \bauthor{\bsnm{Li},~\bfnm{Tao}\binits{T.}},
  \bauthor{\bsnm{Shwartz},~\bfnm{Larisa}\binits{L.}} \AND
  \bauthor{\bsnm{Grabarnik},~\bfnm{Genady~Ya}\binits{G.~Y.}}
(\byear{2017}).
\btitle{Knowledge guided hierarchical multi-label classification over ticket
  data}.
\bjournal{IEEE Transactions on Network and Service Management}
\bvolume{14}
\bpages{246--260}.
\end{barticle}
\endbibitem

\bibitem[\protect\citeauthoryear{Zhang and Zhou}{2013}]{zhang2013review}
\begin{barticle}[author]
\bauthor{\bsnm{Zhang},~\bfnm{Min-Ling}\binits{M.-L.}} \AND
  \bauthor{\bsnm{Zhou},~\bfnm{Zhi-Hua}\binits{Z.-H.}}
(\byear{2013}).
\btitle{A review on multi-label learning algorithms}.
\bjournal{IEEE Transactions on Knowledge and Data Engineering}
\bvolume{26}
\bpages{1819--1837}.
\end{barticle}
\endbibitem

\end{thebibliography}

%% if your bibliography is in bibtex format, uncomment commands:
%\bibliographystyle{imsart-number} % Style BST file (imsart-number.bst or imsart-nameyear.bst)
%\bibliography{bibliography}       % Bibliography file (usually '*.bib')

%% or include bibliography directly:
% \begin{thebibliography}{}
% \bibitem{b1}
% \end{thebibliography}

%\input{appendix}

% \begin{supplement}
%   \stitle{Supplementary Material of ``Ranking hierarchical classification results with mLPRs''}
%   \sdescription{In the Supplementary Material, we provide additional details on the hit curve and discuss HierRank from multiple perspectives, including its formal algorithmic procedure, a faster implementation, and an extension to DAGs. We also present theoretical justification for the cutoff selection procedure, along with further empirical results and proofs of the theorems stated in this article. %In Supplementary Material, we provide more details about the hit curve. We discuss HierRank from various perspectives, including the formal version of HierRank, an equivalent algorithm of HierRank, a faster version of HierRank, and an extension of HierRank to DAG. We also provide theoretical justification of the cutoff selection procedure. Finally, we provide more empirical results and proofs of theorems presented in this article.
%   }

% \end{supplement}

\end{document}